
\documentclass[preprint,12pt,sort&compress]{elsarticle}




\usepackage{amsmath}
\usepackage{amssymb}
\usepackage{amsthm}
\usepackage{mathrsfs}

\usepackage[compatibility=false]{caption}
\usepackage{subcaption}

\usepackage{enumitem}

\usepackage[retainorgcmds]{IEEEtrantools}

\usepackage{multirow}

\usepackage[usenames,dvipsnames]{xcolor}

\usepackage{tikz}

\usepackage{setspace}

\usepackage{pdfcomment}

\RequirePackage[linesnumbered,ruled,vlined]{algorithm2e}


\theoremstyle{definition}
\newtheorem{myDef}{Definition}

\theoremstyle{plain}
\newtheorem{myTheorem}{Theorem}

\theoremstyle{plain}

\theoremstyle{plain}
\newtheorem{myCorollary}{Corollary}

\theoremstyle{definition}

\theoremstyle{remark}
\newtheorem{myExample}{Example}

\newenvironment{myExampleCont1}{
   \addtocounter{myExample}{-1} \begin{myExample}[continued]}{
   \end{myExample}
   }

\newcommand*\circled[1]{\tikz[baseline=(char.base)]{
  \node[shape=circle,draw,inner sep=0pt] (char) {#1};}}

\newcommand\independent{\protect\mathpalette{\protect\independenT}{\perp}}
\def\independenT#1#2{\mathrel{\rlap{$#1#2$}\mkern2mu{#1#2}}}

\journal{arXiv}



\begin{document}
\begin{frontmatter}



\title{Properties of an $N$~Time-Slice Dynamic Chain Event Graph}

\author[label1]{Rodrigo A. Collazo\corref{cor1}}
\ead{Collazo@marinha.mil.br}

\author[label2]{Jim Q. Smith}
\ead{J.Q.Smith@warwick.ac.uk}

\address[label1]{Department of Systems Engineering, Naval Systems Analysis Centre, Rio de Janeiro 20091-000, Brazil }

\address[label2]{Department of Statistics, University of Warwick, Coventry CV4 7AL, United Kingdom }

\cortext[cor1]{Corresponding author}
 
\begin{abstract}
A Dynamic Chain Event Graph (DCEG) provides a rich tree-based framework for modelling a dynamic process with highly asymmetric developments. An $N$ Time-Slice DCEG (${N\text{T-DCEG}}$) is a useful subclass of the DCEG class that exhibits a specific type of periodicity in its supporting tree graph and embodies a time-homogeneity assumption. Here some desired properties of an ${N\text{T-DCEG}}$ is explored. In particular, we prove that the class of \text{$N$T-DCEGs} contains all discrete $N$~time-slice Dynamic Bayesian Networks as special cases. We also develop a method to distributively construct an ${N\text{T-DCEG}}$ model. By exploiting the topology of an ${N\text{T-DCEG}}$ graph, we show how to construct intrinsic random variables which exhibit context-specific independences that can then be checked by domain experts. We also show how an ${N\text{T-DCEG}}$ can be used to depict various structural and Granger causal hypotheses about a given process.  Our methods are illustrated throughout using examples of dynamic multivariate processes describing inmate radicalisation in a prison.  
\end{abstract}

\begin{keyword}
chain event graph \sep dynamic Bayesian network \sep Markov process \sep dynamic model \sep multivariate time series \sep Granger causality \sep causal inference \sep graphical model \sep conditional independence




\end{keyword}

\end{frontmatter}


\section{Introduction}
\label{sec:Introduction}

In many real-world settings it has became increasingly evident that describing
a process directly through components of a multivariate time series enables us
to obtain more accurate and well-calibrated models. A Dynamic Bayesian Network (DBN) \cite{Dean.Kanazawa.1989,Nicholson.1992,Kjaerulff.1992} is a widely used family of graphical model for representing and reasoning within dynamic systems whose progress is recorded over a discrete time intervals \cite{Dabrowski.Villiers.2015,Rubio.etal.2014,Marini.etal.2015,Li.etal.2014,Wu.etal.2015,Sun.Sun.2015,Khakzad.2015}. However, in some context a DBN model is not able to represent all structural information of the target process \cite{Poole.Zhang.2003}. This is particularly the case when the process is more naturally described by concatenations of unfolding events rather than by a product space of preassigned set of random variables. In other situations, a relevant statement corresponding to a conditioned variable cannot be directly incorporated into a DBN model using directed edges because it is valid only for a certain combinations of values assumed by the conditioning variables. In the literature, this type of statements is sometimes referred to context-specific information~\cite{Spiegelhalter.Lauritzen.1990,Boutilier.etal.1996}. 

To circumvent these issues, collections of networks and embellishments in the form of trees have been added to the DBN framework and computationally implemented using the object-oriented programming paradigm~\cite{Booch.2007}: for instance, see the developments on context-specific BNs~\cite{Boutilier.etal.1996, Poole.Zhang.2003, McAllester.etal.2008}, Bayesian Multinet~\cite{Geiger.Heckerman.1996}, Similarity Networks~\cite{Heckermann.1991} and Object-Oriented BNs~\cite{Koller.Pfeffer.1997,Bangso.Wullemin.2000}. However, such frameworks  focus on minimizing the computational cost associated with the propagation of information and model learning at the expense of the graphical expressiveness for decision makers and domain experts. An alternative approach is to adopt a graphical framework that is completely different from these based on direct acyclic graphs and can directly express context-specific information we refer to above.

Tree-based graphical models have been established as a key user-friendly method to translate domain hypotheses about a dynamic process into mathematical representations. Since its paths can be used to depict the various possible sequences of events a unit can experience over time, a tree provides a modeller with a flexible framework to accommodate asymmetric developments and context-specific structures. The Dynamic Chain Event Graph (DCEG)~\cite{Barclay.etal.2015, Collazo.Smith.2018a} is a particular type of infinite tree-based graphical model developed for discrete longitudinal data. Build upon a Chain Event Graph (CEGs)~\cite{Smith.Anderson.2008} a DCEG~\cite{Barclay.etal.2015} was originally envisaged to elicit simple semi-Markov processes defined as a very small number of discrete states. Recall that a CEG is supported by a finite tree and so appropriate to construct models in a non-dynamic context. In~\cite{Collazo.Smith.2018a} we rigorously advanced the foundation of DCEGs for specific classes of Markov process and defined a promising subclass called $N$~Time-Slice DCEG ($N$T-DCEG). Using objects and a special family of CEGs, we then presented a methodology to construct an $N$T-DCEG and to reason with it. Its links with Markov processes were also systematically explored. 

Being designed to model time-homogeneous Markov processes, an $N$\text{T}-DCEG has a graphical structure corresponding to a finite cyclic graph. This graph compactly encapsulates the supporting infinite tree initially used to describe the process. Here we show how an $N$T-DCEG retains many useful properties of a DBN whilst providing an expressive framework for various tasks associated with collaborative work, reasoning and Granger causal interpretation. In particular, we propose methods for distributed model construction and identification of random variables that drive the underlying stochastic process and are not directly defined by domain experts.

In this paper we derive and describe some of these properties. In Section~\ref{sec:Background} we briefly review the DBN and the $N\text{T-DCEG}$ models  before discussing some links between DCEGs and DBNs in Section~\ref{sec:SDCEG}. For example, we prove that the well used Two Time-Slice Dynamic Bayesian Network (2T-DBN) can always be expressed as a 2T-DCEG. Section~\ref{sec:Composite_Model} develops a method which uses the framework of the class of $N$T-DCEG to elicit the driven structure of a dynamic process from a team of experts working in parallel.  

In Section~\ref{sec:Constructiong_Random_Variable} we show how implicit conditional independence relationships encoded in an $N\text{T-DCEG}$ can be read from its representation using the graphical concept of a cut. This enables us to identify from the topology of an ${N\text{T-DCEG}}$ convenient sets of random variables - often not immediately apparent from the original description - whose relationship captures critical conditional independences embedded within the described process. Smith and Anderson \cite{Smith.Anderson.2008} have argued that the cuts in a CEG can be used as an alternative framework to answer queries corresponding to the Pearl's d-separation theorem \cite{Pearl.1988} in a BN. We show that these constructions naturally extend to an $N\text{T-DCEG}$. In Section~\ref{sec:local_global_independence}, we then explore the ideas of local, contemporaneous and stochastic independences. These have a strong link with notions of Granger noncausality \cite{Granger.1969}, also \cite{Hsiao.1982,Geweke.1984,Eichler.2007,Eichler.Didelez.2010}. We conclude the paper with a short discussion. 

\section{Background}
\label{sec:Background}

In this section we revisit the DBN and the DCEG models. 

\subsection {Dynamic Bayesian Networks}
\label{subsec:BN_DBN}

Let $\boldsymbol{{Z}}^{(m)}\!=\!(\mathcal{Z}_1,\ldots,\mathcal{Z}_m), m \leq n$, be the first $m$ variables of a sequence of random variables $\boldsymbol{{Z}}=(\mathcal{Z}_1,\ldots,\mathcal{Z}_n)$. Take a  directed acyclic graphic (DAG) $\mathbb{D}=(V,E)$ such that each vertex~$v_i$, $v_i \in V$, represents a variable $\mathcal{Z}_i$. Let ${pa(\mathcal{Z}_j)}=\{\mathcal{Z}_i \in \boldsymbol{\mathcal{Z}}^{(j-1)};(v_i,v_j) \in E\}$ denote the parent set of $\mathcal{Z}_j$ with respect to~$\mathbb{D}$ and $P_{\boldsymbol{Z}}$~denote the joint distribution of~$\boldsymbol{Z}$. We can now introduce the ordered Markov property that enables us to relate $P_{\boldsymbol{Z}}$ to the graphical topology of~$\mathbb{D}$. We then use it to formally define a BN model \cite{Duda.etal.1976,Pearl.1985,Pearl.1988}.

\begin{myDef}
	\label{def:bn_omp}
	The joint distribution~${P}_{\boldsymbol{Z}}$ satisfies the \textbf{ordered Markov property} (OMP) relative to a DAG~$\mathbb{D}$ if for every pair of non-adjacent vertices $v_i$ and $v_j$ in $V$, $i<j$, a variable $\mathcal{Z}_j$ is conditionally independent of a variable $\mathcal{Z}_i, i<j,$ given its parent set $pa(\mathcal{Z}_j)$. 
\end{myDef}

\begin{myDef}
	\label{def:bn}
	A \textbf{Bayesian Network} (BN) is a graphical model constituted by a sequence of random variables~$\boldsymbol{{Z}}$ and by a DAG $\mathbb{D}$ such that the joint distribution~${P}_{\boldsymbol{Z}}$ satisfies the ordered Markov property relative to $\mathbb{D}$.
\end{myDef}

In its most common formulation \cite{Dean.Kanazawa.1989,Kjaerulff.1992,Nicholson.1992} a Dynamic Bayesian Network (DBN)  models the temporal relationship among variables that are observed at regular time intervals. So henceforth in this paper we let $\boldsymbol{{Z}}(t)$  denote a sequence of random variables $\boldsymbol{{Z}}$ observed at time~$t$.  

Assume that a DAG ${\mathbb{D}(T)\!=\!(V(T),E(T)))}$ represents the conditional independence relationships between the components of~$\boldsymbol{{Z}}(T)$.
Now define the set of temporal edges $E_\dagger(T)$. These are edges from a vertex $v_i(t) \in V(t)$, ${t<T}$, to a vertex $v_j(T) \in V(T)$. So these represent relationships between variables in different time-slices.  Note that there might be a temporal edge $(v_i(t),v_i(T))$. This would depict the dependence of a variable $\mathcal{Z}_i$ at time $T$ on its value at any previous time~$t$, $t<T$. Inheriting the usual semantics of a BN two non-adjacent vertices $v_i(t)\in V(t)$ and $v_j(T) \in V(T)$, such that $t \leq T$ and, if $t=T$, $i<j$, then imply that $\mathcal{Z}_j(T)$ is conditionally independent of a variable $\mathcal{Z}_i(t)$ given its parent set $pa(\mathcal{Z}_j(T))$, where $pa(\mathcal{Z}_j(T)) \subseteq \cup_{k=0}^{T-1} \boldsymbol{{Z}}(k) \cup \boldsymbol{{Z}}^{(j-1)}(T)$.
Finally, a DBN for the first $T$ time-intervals consists of DAG ${\bar{\mathbb{D}}(T)\!=\!(\bar{V}{(T)},\bar{E}{(T)})}$, where $\bar{V}{(T)}\!=\!\cup_{t=0}^T V(t)$ and ${\bar{E}{(T)}\!=\!\cup_{t=0}^T (E(t) \cup E_\dagger(t))}$.

Without further assumptions, the specification of a DBN model is challenging because a different DAG $\mathbb{D}(t)$ and its corresponding temporal edge set needs to be defined for each time-slice~$t$. So for practical reasons two additional conditions are often hypothesised. The first of these is to assume a Markov condition of order~$N\!-\!1$. This demands that the values of a variable at time~$t$ depend only on the values of variables at the last ${N\!-\!1}$ previous and current intervals. The second common hypothesis is to assume that the process is time-homogeneous. 

These assumptions greatly simplifies the specification of the models.  This is because we only have to elicit $N$ conditional probabilities tables, one for each of the first ${N\!-\!1}$ time-slices and another for the succeeding time-slices. Therefore, to obtain a DBN we only need to define a limited number of DAGs and temporal edges: the DAGs $\mathbb{D}(t),t=0,\ldots,{N\!-\!2}$, for the first ${N\!-\!1}$ time-slices and their corresponding sets of temporal edges $E_\dagger(t)$; and a DAG $\mathbb{D}(t)\equiv \mathbb{D}, t={N\!-\!1},N,\ldots$, for all subsequent intervals and its corresponding set of temporal edges $E_\dagger(t)\equiv E_\text{\circled{$\dagger$}}$. When these two additional assumptions are adopted a DBN is called a $N$~Time-Slice DBN ($N$T-DBN). 

A common choice in practice is to set $N=2$, see e.g. \cite{Korb.Nicholson.2011,Neapolitan.2004,Pourret.etal.2008}. This implies that the current value of a given variable may persist in the system at maximum one time-slice ahead. In this case, the state of the system at time $t+1$ is completely determined by the values of its variables at time~$t$. Although these are strong assumptions, they nevertheless appear to provide satisfactory result particularly in systems that evolve slowly over time when we are interested in filtering and forecasting over short-term time horizon. 

\begin{myExample}
\label{ex:Radicalisation_Dynamic_3variables}

\emph{We revisit the example in~\cite{Collazo.Smith.2018a} that summarily describe the radicalisation dynamic of inmates in a prision using three random variables:
	\begin{itemize}[nosep]
		\item the variable called Network~($N$) distinguishes the following three levels of social contact of a ``standard" inmate with potential recruiters at each time interval~$t$: $\text{s- sporadic}$, $f$- frequent or $i$- intense; 
		\item the variable called Radicalisation~($R$) categorises a prisoner into one of the following states at each time interval~$t$: resilient to (r), vulnerable to (v) or adopting (a) radicalisation.
		\item the variable variable called Transfer~($T$) is a categorical variable indicating if a prisoner remains in prison ($n$) or was transferred~($t$) at the end of a time interval~$t$. 
	\end{itemize} 
	}

\emph{
	Assuming time-homogeneity from the onset and the 1-Markov condition, the set of conditional statements below fully characterizes our elicited radicalisation process:
	\begin{enumerate}[nosep]
		\item $T(0) \independent N(0) | R(0)$.
		\item $T(0) \independent N(0) | R(0) \neq a$.
		\item $R(t+1) \independent N(T) | (R(t), T(t)=n)$, for $t=0,1,\ldots$.
		\item $N(t+1) \independent R(T) | (N(t), T(t)=n)$, for $t=0,1,\ldots$.
		\item $T(t+1) \independent (N(T),R(t)) | T(t)=n$, for $t=0,1,\ldots$.
		\item $T(t+1) \independent N(T+1) | (R(t+1),T(t)=n)$, for $t=0,1,\ldots$.		
		\item $R(t+1) \independent N(t+1) | (R(t)=a,T(t)=n)$, for $t=0,1,\ldots$.		
		\item $R(t+1) \independent R(t) | (N(t+1)=a,R(t) \neq a, T(t)=n)$, for $t=0,1,\ldots$.
		\item $T(t+1) \independent N(T+1) | (R(t+1) \neq a,T(t)=n)$, for $t=0,1,\ldots$.			
	\end{enumerate}
	} 

\emph{
	A 2T-DBN corresponding to this description given that an inmate remains in prison until time~$t$ is depicted in Figure~\ref{fig:DBN_Radicalisation}. Note that a standard DBN is unable to depict the context-specific conditional statements elicited in statements 2, 7, 8 and 9. These relationships remains hidden within the conditional probability tables.}

\begin{figure}[ht]
\begin{center}
\includegraphics[scale=0.179,angle=-90,origin=c,trim=0 0 0 -190]{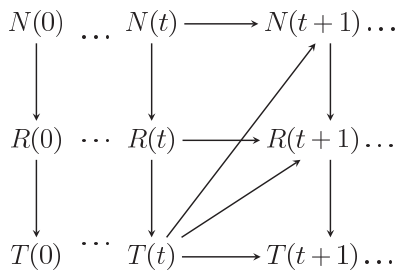}
\end{center}
\vspace{-30pt}
\caption{2T-DBN associated with Example \ref{ex:Radicalisation_Dynamic_3variables} \label{fig:DBN_Radicalisation}}
\end{figure}

\end{myExample}

\subsection{Dynamic Chain Event Graphs}
\label{subsec:DCEG}

A DCEG model~\cite{Collazo.Smith.2018a} provides a compact colourful representation of a probabilistic tree through three simple steps: the elicitation of an event tree, its transformation into a staged tree and finally its wrapping into a DCEG graph. Using colours and some graphical transformations, this framework combines the graphical expressiveness of DBNs with the flexibility of tree graphs to depict both context-specific information and asymmetric developments.   

We first need to elicit an infinite event tree~$\mathcal{T}_\infty$~\cite{Shafer.1996} that describes qualitatively how the target process may unfold into sequences of events over discrete time intervals. Formally, an event tree~$\mathcal{T}$ is a rooted directed tree. Each leaf vertex~$l_i$ represents a possible terminating state of the process so has no emanating edges. A non-leaf vertex called situation~$s_i$, on the other hand, is a state from which a transition is possible. Both types of vertices are characterised by the chain of events that happens along either a root-to-$s_i$ path or a root-to-$l_i$ path. Each edge can be labelled by an event. Recall that in a graph~$\mathbb{G}=(V_\mathbb{G},E_\mathbb{G})$ a path is a subgraph~$\mathbb{G}_\mathbb{P}=(V_\mathbb{P},E_\mathbb{P})$, such that $V_\mathbb{P}=\{v_{i_1},\ldots,v_{i_L}\}$ and $E_\mathbb{P}=\{(v_{i_1},v_{i_2}),(v_{i_2},v_{i_3}),\ldots,(v_{i_{L-1}},v_{i_L})\}$, where  $V_\mathbb{P} \!\subseteq\! V_\mathbb{G}$, $E_\mathbb{P} \!\subseteq\! E_\mathbb{G}$ and all vertices~$v_{i_k}$, $k=1,\ldots,L$, are distinct~\cite{Cowell.etal.2007,Diestel.2006}. 

We next construct the staged tree~$\mathcal{ST}_\infty$ that is a coloured probability tree supported by the event tree~$\mathcal{T}_\infty$. To embed a probability map within~$\mathcal{T}_\infty$, each situation~$s_i$ is associated with a random variable~$X(s_i)$, whose state space~$\mathbb{X}(s_i)=\{\gamma_{ij}\}$ describes all immediate unfolding events~$\gamma_{ij}$ that may happen to unit at~$s_i$. Let $ch(v)=\{v' \in V_\mathbb{G}; (v,v') \in E_{\mathbb{G}}\}$ be the child set of a vertex~$v$ in any graph~$\mathbb{G}=(V_{\mathbb{G}},E_{\mathbb{G}})$. For each situation~$s_i$ in~$\mathcal{T}_\infty$, we can define the primitive probabilities
\vspace{-7pt}
$$\pi_{ij}=\pi(v_j|s_i)=P(X(s_i)=\gamma_{ij}|s_i), v_j \in ch(s_i).\vspace{-7pt}$$
Note that each event~$\gamma_{ij}$ labels the edge~$(s_i,v_j)$, $v_j \in ch(s_i)$. Two situations~$s_a$ and~$s_b$ in an event tree is said to be in the same \emph{stage~u} if and only if there is a one-to-one domain mapping between the state space of~$\mathbb{X}(s_a)$ and~$\mathbb{X}(s_b)$ and their corresponding primitive probabilities are the same. Associating each stage with a unique colour and then embellishing the vertices of an even tree with these colours, we obtain a staged tree.

Introducing the concept of a position enables us to identify and graphically suppress redundant conditional independent structures that our probabilistic model may present. Let~$\Lambda(\mathcal{T})$ be the set of paths of a tree~$\mathcal{T}$ until time~$t$. Also let~$\mathcal{T}(v)$ be the subtree that unfolds from a vertex~$v$ in a tree~$\mathcal{T}$. Two situations~$s_a$ and~$s_b$ in a staged tree~$\mathcal{ST}_\infty$ is said to be at the same \emph{position~w} if and only if there is a one-to-one mapping between~$\Lambda(\mathcal{ST}(s_a))$ and~$\Lambda(\mathcal{ST}(s_b))$ such that the following conditions hold:
\begin{enumerate}[nosep]
	\item \emph{Global condition} -  for any path~$\lambda$ in~$\Lambda(\mathcal{ST}(s_a))$ there is a path~$\lambda'$ in~$\Lambda(\mathcal{ST}(s_b))$ whose sequence of events and colours equals the sequence of events and colours of~$\lambda$.
	\item \emph{Local condition} - the numbers of events that happen during the $t^{th}$ time-slice associated with a path~$\lambda$ in~$\Lambda(\mathcal{ST}(s_a))$ and its corresponding path~$\lambda'$ in~~$\Lambda(\mathcal{ST}(s_b))$ are equal. 
\end{enumerate}

Finally, a \emph{DCEG graph}~$\mathbb{C}$ results from two simple graphical transformations of a staged tree: to merge all situations lying in the same position~$w$ into a single vertex~$w$ and to gather all leaf nodes, if they exist, into a single sink vertex~$w_\infty$. Let $\mathcal{F}(\mathbb{C})=\{\mathbb{C}_t; t=0,1,\ldots\}$ be a set of CEGs associated with a DCEG~$\mathbb{C}$, where $\mathbb{C}_t$ is the CEG supported by the staged tree corresponding to the first $t$-time-slices of~$\mathbb{C}$.

\subsubsection{An $N$ Time-Slice Dynamic Chain Event Graph}
\label{subsubsec:NTDCEG}

In~\cite{Collazo.Smith.2018a} we develop a method for depicting an infinite event tree using a set of objects, each of whom encapsulates a finite event corresponding to a finite process. The objects are concatenated according some domain-based rules. A common assumption is to define two objects, $\Delta(\mathcal{T}_{-1})$ and $\Delta(\mathcal{T})$. The object~$\Delta(\mathcal{T}_{-1})$ wraps a finite event tree~$\mathcal{T}_{-1}$ whose leaves define different type of unit observed in the process. Of course, if there does not exist domain information that enables us to distinguish the units, then $\mathcal{T}_{-1}$ is empty. In contrast, the object~$\Delta(\mathcal{T})$ represents a finite event tree that describe the possible sequences of events that might happen to a unit over a prescribed time interval. In this sense, the resulting infinite tree is periodic since we unfold the same object~$\Delta(\mathcal{T})$ from each situation representing a state at the beginning of each time interval. This type of infinite tree is said to be a \emph{Tree Object Generated by finite event trees~$\mathcal{T}_{-1}$ and~$\mathcal{T}$} (TOG(~$\mathcal{T}_{-1}$,$\mathcal{T}$)). 

Another useful way to define the graphical transformation of the staged tree is through the concept of {T-position}. This is a refinement of the concept of position in the sense that situations in the same T-position are necessarily in the same position but the converse does not always hold. Let $s(t)$ be a situation at time~$t$. Two situations $s_a(t_a)$ and $s_b(t_b)$ are said too be in the same \emph{${T}${-position}} if and only if they are in the same position, and one of the following conditions is valid: $t_a,t_b \in \{T,T+1,\ldots\}$ or $t_a=t_b=t$, $t \in\{-1,0,\ldots,T-1\}$. Using a $T$-position we can guarantee that a DCEG graph is acyclic before time~$T$. This is a useful property for a collaborative model construction: see Section~\ref{sec:Composite_Model}.

Take a infinite staged tree that is time-homogeneous after time~$N\!-\!1$ and whose supporting event tree can be described as a TOG($\mathcal{T}_{-1}$,$\mathcal{T}$). An \emph{\text{$N$T-DCEG} graph} is necessarily constructed from this type of staged tree, where:
\begin{itemize}[nosep]
	\item all situations in the same \text{$(N\!-\!1)$-position} is diverted into a single vertex;
	\item all leaf nodes at each time-slice~$t$ are represented by a single vertex~$w_\infty^t$, $t=-1,0,\ldots,N\!-\!2$; and
	\item all leaf nodes that unfolds from time-slice~$N\!-\!1$ on are collected by a single sink vertex~$w_\infty$.
\end{itemize}

The example below illustrates how to model the radicalisation process using the \text{$N$T-DCEG} framework.

\begin{myExampleCont1}
	\emph{
		Return to Example~\ref{ex:Radicalisation_Dynamic_3variables}. We can elicit the radicalisation process using the TOG($\mathcal{T}_{-1}$,$\mathcal{T}$) showed in Figure~\ref{fig:ObjectOriented_InfiniteEventTree_Radicalisation_3var}, where $\mathcal{T}_{-1}$ is empty and $\mathcal{T}$ is depicted in Figure~\ref{fig:FiniteEventTree_Radicalisation}. Colouring it according to the conditional independence statements we obtain the staged tree that is fully expressed by the $2$T-DCEG depicted in Figure~\ref{fig:2T-DCEG_Radicalisation_3var}. To construct this $2$T-DCEG model it is necessary to construct the staged tree corresponding to the first three time-slices and then to apply the graphical transformation rules presented above. For a more detailed discussion on $N$T-DCEG building, see~\cite{Collazo.Smith.2018a}.	
	}

\begin{figure}[h!]
	\centering
	$\!\!\!\!\!\!\!\!\!\!\!\!\!\!\!\!\!\!\!$
	\begin{minipage}{.5\textwidth}
		\centering
		\includegraphics[scale=0.315,angle=-90,origin=c,trim=0 150 -100 0] {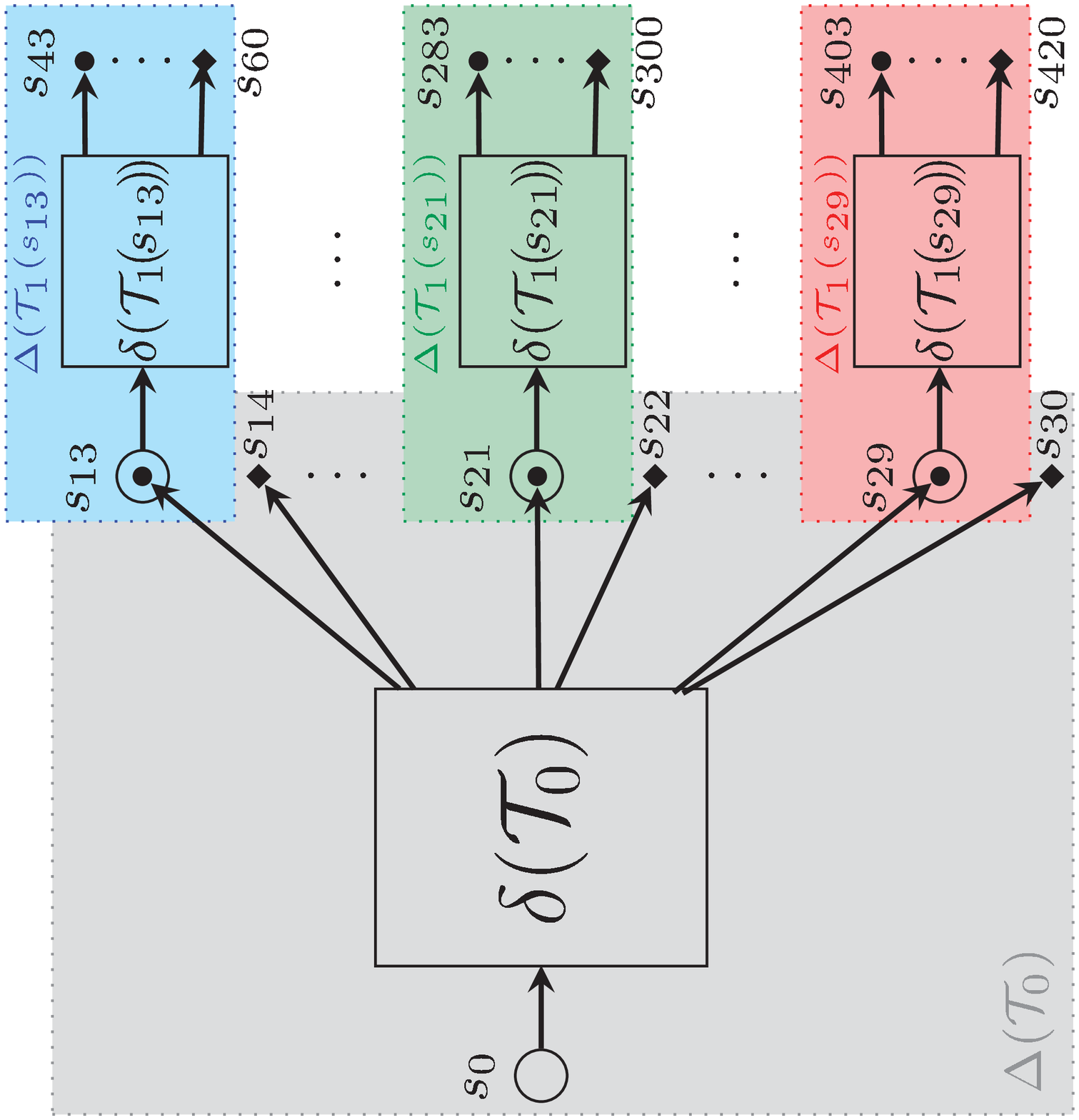}
		\caption{The Event Tree~$\mathcal{T}_1$ associated with Example~\ref{ex:Radicalisation_Dynamic_3variables} that is depicted using event tree objects, each of which is depicted by a dotted rectangle. \label{fig:ObjectOriented_InfiniteEventTree_Radicalisation_3var}}
	\end{minipage}%
	\begin{minipage}{.1\textwidth}
		$\qquad$
	\end{minipage}
	\begin{minipage}{.3\textwidth}
		\vspace{-35pt}
		\centering
		\includegraphics[scale=0.315,angle=-90,origin=c,trim=0 150 -210 0] {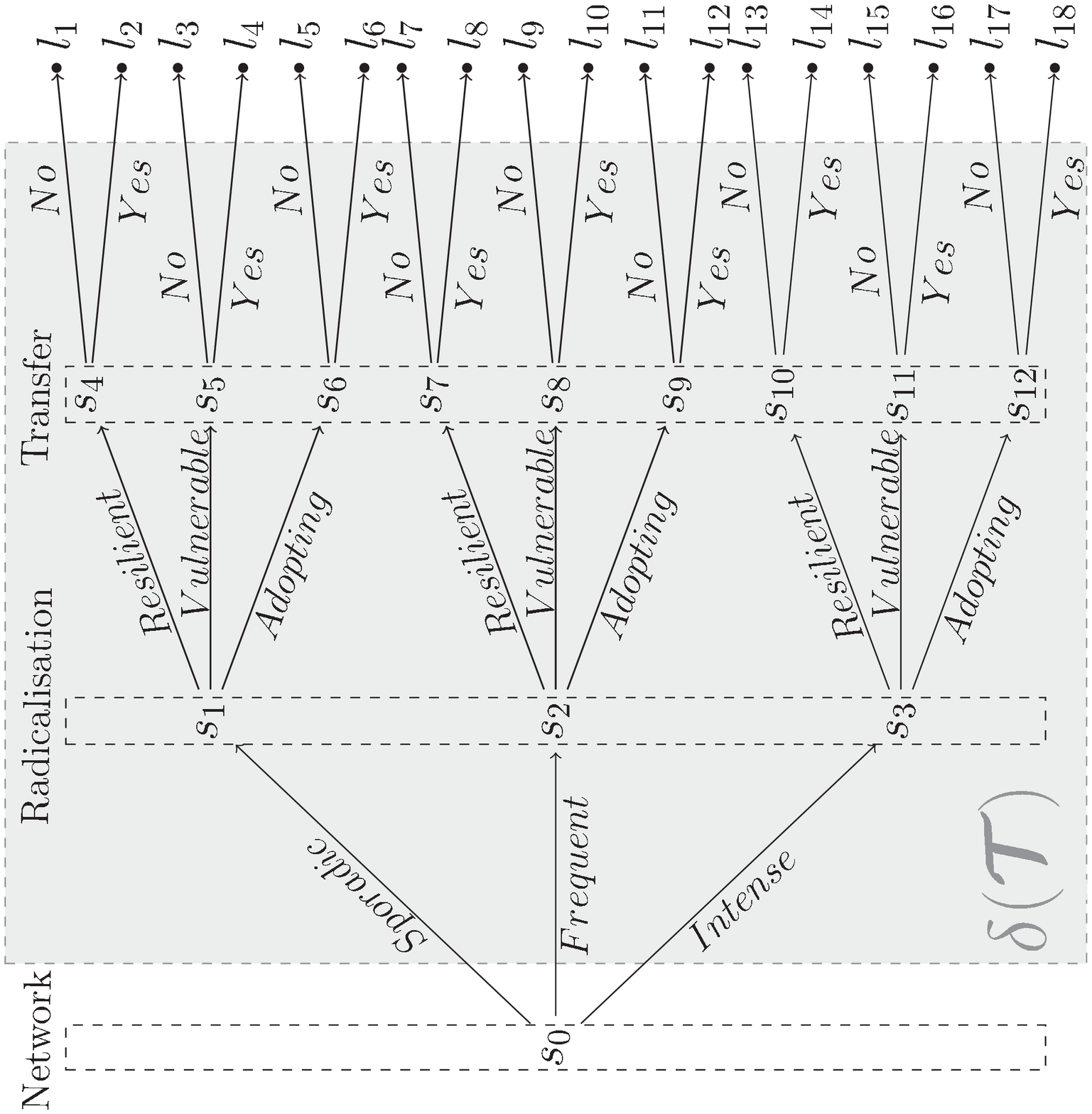}
		\vspace{-35pt}
		\caption{Event Tree~$\mathcal{T}$ \label{fig:FiniteEventTree_Radicalisation}}
	\end{minipage}
\end{figure}

\vspace{0pt}
\begin{figure}[h!]
	\begin{center}
		\includegraphics[scale=1,angle=-90,origin=c,trim=0 23 0 -140]{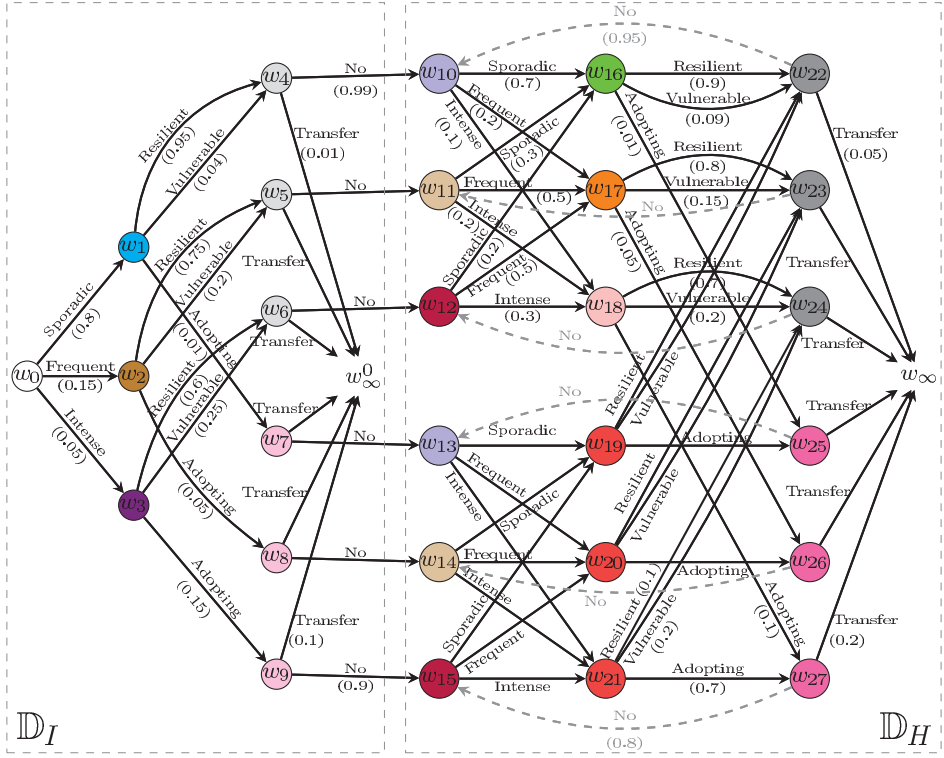}
	\end{center}
	\vspace{-27pt}
	\caption{The 2T-DCEG associated with Example \ref{ex:Radicalisation_Dynamic_3variables}. The stage structure is defined as following: $u_0=\{w_0\}$, $u_1=\{w_1\}$, $u_2=\{w_2\}$, $u_3=\{w_3\}$, $u_4=\{w_4,w_5,w_6\}$, $u_5\!\!=\!\!\{w_7,w_8,w_9\}$, $u_6\!\!=\!\!\{w_{10},w_{13}\}$, $u_7\!\!=\!\!\{w_{11},w_{14}\}$, $u_8\!=\!\{w_{12},w_{15}\}$, $u_9\!=\!\{w_{16}\}$, $u_{10}\!=\!\{w_{17}\}$, $u_{11}=\{w_{18}\}$, $u_{12}=\{w_{19},w_{20},w_{21}\}$,  $u_{13}=\{w_{22},w_{23},w_{24}\}$, $u_{14}=\{w_{25},w_{26},w_{27}\}$. The conditional probabilities associated with a particular stage is shown in parentheses. A dotted rectangle establishes the limit of a particular graph. \label{fig:2T-DCEG_Radicalisation_3var}}
	\vspace{-7pt}
\end{figure}
\end{myExampleCont1}

It is important to recall some useful concepts introduced in~\cite{Collazo.Smith.2018a}. Let $w(t)$ be a position~$w$ of a DCEG or a $T$-position~$w$ of an \text{$N$T-DCEG} at a time~$t$, i.e. corresponding to the non-empty set of situations~$s(t)$ that $w$ merged. In an~\text{$N$T-DCEG} model~$\mathbb{C}$, an edge $(w_a,w_b)$ is said to be a \emph{temporal edge} associated with time-slice~$t$ if and only if we can write it as ${(w_a(t),w_b(t+1))}$. Particularly, a temporal edge associated with time-slices~$t$, $t=N,N\!+\!1,\ldots$, is called a \emph{cyclical temporal edge}. Analogous to a DBN,  the set of all temporal edges and the set of all cyclic temporal edges will be denoted, respectively, by~$E_\dagger$ and $E_\text{\circled{$\dagger$}}$. 

Next remember that every~\text{$N$T-DCEG} graph has two useful subgraphs: the initial graph~$\mathbb{D}_I$ and the cyclic graph~$\mathbb{D}_H$~\cite{Collazo.Smith.2018a}. The subgraph~$\mathbb{D}_I$ represents the initial states of the process over the first $N\!\!-\!\!1$ time-slices and the subgraph $\mathbb{D}_{H}$ depicts the time-homogeneous developments over the subsequent time-slices. Note that $\mathbb{D}_I$~is an acyclic graph. In contrast, $\mathbb{D}_H$~is a cyclic graph because it contains all cyclic temporal edges. For example, assume that a 2T-DBN $\mathbb{D}$ is re-expressed as a 2T-DCEG $\mathbb{C}$ (Corollary~\ref{the:2TDBN_2TSDCEG}). The initial time-slice~$\mathbb{D}(0)$ of $\mathbb{D}$ corresponds to the subgraph $\mathbb{D}_I$ of~$\mathbb{C}$. The  time-homogeneous time-slices $\mathbb{D}(t), t = 1,2,\ldots$, and its associated temporal edges $E_\dagger(t),{t=2,3,\ldots}$, are depicted in the subgraph $\mathbb{D}_H$ of $\mathbb{C}$.

\section{A Stratified Dynamic Chain Event Graph}
\label{sec:SDCEG}

A Stratified Dynamic Chain Event Graph (SDCEG) is a recommended DCEG model when domain experts are more comfortable eliciting the observed process using a set of random variables rather than describing it by sequences of unfolding events. An SDCEG naturally extends the BN framework and so enables domain experts to refine a BN model with context-specific conditional independences and logical constrains.

In our example, we illustrate that exploring these refinements the number of conditional probabilities to be elicited for an SDCEG to fully describe a process is less than that needed for its corresponding DBN. This is desirable for two reasons. First, domain experts can often convey more faithfully the qualitative structures of the observed process than on the quantification of conditional probability distributions. Second, this simplicity associated with an SDCEG implicity embeds sparsity which commonly occurs and is usually enforced in real-world settings. Of course, these advantages need to be weighed against demanding an additional effort from domain experts to construct the staged tree as opposed to a DAG.

Take a non-empty set of random variables $\boldsymbol{\mathcal{Z}}=\boldsymbol{\mathcal{Z}}^{(r)} \cup \boldsymbol{\mathcal{Z}}^{(r,s)}$, where  ${r \in \{1,\ldots,s\!-\!1}\}$,  $\boldsymbol{\mathcal{Z}}^{(r)}=\{\mathcal{Z}_1,\ldots,\mathcal{Z}_r\}$  is a set of $r$~time-invariant variables and ${\boldsymbol{\mathcal{Z}}^{(r,s)}=\{\mathcal{Z}_{r+1},\ldots,\mathcal{Z}_s\}}$ is a set of $r\!-\!s$~variables that take values at each time-slice $t=0,1,\ldots$. Note that $\boldsymbol{\mathcal{Z}}^{(r)}$ may also be an empty set and in this case we convention that $r=0$. Let $I=(I^{(r)},I^{(r,s)})=(i_1,\ldots,i_r,i_{r\!+\!1},\ldots,i_s)$ be a permutation of the set $\{1,\ldots,s\}$, such that 
$I^{(r)}=(i_1,\ldots,i_r)$ and $I^{(r,s)}=(i_{r\!+\!1},\ldots,i_s)$ are, respectively, permutations of the sets~$\{1,\ldots,i_r\}$ and~$\{i_{r\!+\!1},\ldots,i_s\}$. We can use~$I$ to order the set of variables $\boldsymbol{\mathcal{Z}}$ as follows:
\vspace{-7pt}
$$\boldsymbol{\mathcal{Z}}\stackrel{I}{\longmapsto} (\overbrace{\mathcal{Z}_{i_{1}},\ldots,\mathcal{Z}_{i_r}}^{\boldsymbol{{Z}}^{(r)}(I^{(r)})},\overbrace{\mathcal{Z}_{i_{r\!+\!1}},\ldots,\mathcal{Z}_{i_s}}^{\boldsymbol{{Z}}^{(r,s)}(I^{(r,s)})}) \triangleq
\boldsymbol{Z}(I)=(\boldsymbol{{Z}}^{(r)}(I^{(r)}),\boldsymbol{{Z}}^{(r,s)}(I^{(r,s)})),\vspace{-7pt}$$
where $\boldsymbol{Z}(I)$, $\boldsymbol{{Z}}^{(r)}(I^{(r)})$ and~$\boldsymbol{\mathcal{Z}}^{(r,s)}(I^{(r,s)})$ are,respectively, the ordered sequences of the variables in $\boldsymbol{\mathcal{Z}}$, $\boldsymbol{\mathcal{Z}}^{(r,s)}$ and $\boldsymbol{\mathcal{Z}}^{(r,s)}$ spanned by~$I, I^{(r)},I^{(r,s)}$.

Now construct the finite trees~$\mathcal{T}_{-1}(\boldsymbol{{Z}}(I))$ and~$\mathcal{T}(\boldsymbol{{Z}}(I))$ corresponding, respectively, to $\boldsymbol{{Z}}^{(r)}(I^{(r)})$ and~$\boldsymbol{{Z}}^{(r,s)}(I^{(r,s)})$. Recall that an event tree $\mathcal{T}(\boldsymbol{Z})$ based on a sequence of random variables~$\boldsymbol{Z}=(\mathcal{Z}_1,\ldots,\mathcal{Z}_n)$ is one whose set of paths $\Lambda(\mathcal{T(\boldsymbol{Z})})$ can be expressed as a product space associated with the sequence of random variables $\mathcal{Z}_1,\ldots,\mathcal{Z}_n$. For more details on this construction, see \cite{Cowell.Smith.2013,Barclay.etal.2013a, Collazo.etal.2018} and the proof of Theorem~\ref{the:DBN_SDCEG}. 

\begin{myDef}
\label{def:SDCEG}
A DCEG is called a $\boldsymbol{\mathcal{Z}}-$\textbf{Stratified Dynamic Chain Event Graph} ($\boldsymbol{\mathcal{Z}}-$SDCEG) when its staged tree satisfies the following conditions:
\begin{enumerate}[nosep]
\item For some valid permutation~$I$, its supporting event tree can be expressed as a TOG($\mathcal{T}_{-1}(\boldsymbol{Z}(I)),\mathcal{T}(\boldsymbol{Z}(I))$).
\item Each stage only merges situations associated with the same random variable, although they may be at different time-slices.
\end{enumerate}
A $\boldsymbol{\mathcal{Z}}-$\textbf{SDCEG model} associated with a $\boldsymbol{\mathcal{Z}}-$SDCEG~$\mathbb{C}$ is a graphical model whose sample space is represented by the supporting event tree of $\mathbb{C}$ and whose probability measure respects the set of conditional independence statements depicted by~$\mathbb{C}$. A staged tree that satisfies the two conditions above are said to be a $\boldsymbol{\mathcal{Z}}-$\textbf{Stratified Staged Tree}. If the staged tree of an $N$T-DCEG is $\boldsymbol{\mathcal{Z}}$-stratified, we obtain an $\boldsymbol{N}$ \textbf{Time-Slice} $\boldsymbol{\mathcal{Z}}-$\textbf{SDCEG} ($N$T$\boldsymbol{\mathcal{Z}}$-SDCEG). 
\end{myDef}

This framework can be particularly useful for causal analyses (see Section~\ref{sec:local_global_independence}) when the total or partial variable orders of the vectors   $\boldsymbol{\mathcal{Z}}^{(r)}$ and $\boldsymbol{\mathcal{Z}}^{(r,s)}$ imply different causal hypotheses. The SDCEG class is also an important model family because Theorem~\ref{the:DBN_SDCEG} tells us that every DBN can be rewritten as an SDCEG. In particular, according to Corollary~\ref{the:2TDBN_2TSDCEG} every 2T-DBN can be translated into a 2T-SDCEG. These results enable us to embellish a DBN with context-specific statements using the broader class of DCEG models. This can be helpful for model search since a DCEG model space is considerably larger than its corresponding DBN model space. For example, the 2T-DBN model selection can be used as a starting point for a 2T-DCEG model search. For an application of such model search strategy using CEGs and BNs, see \cite{Barclay.etal.2013a}. Alternatively, a DCEG model can provide a framework for constructing random variables (see Section \ref{sec:Constructiong_Random_Variable}) which in turn enable us to express the context-specific statements using a DBN model. 
  
\begin{myTheorem}
\label{the:DBN_SDCEG}
All conditional independence statements entailed by the ordered Markov property  in a DBN can be depicted by an SDCEG.
\end{myTheorem}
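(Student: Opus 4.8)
The plan is to realise the DBN as an SDCEG by building the full product--space event tree in a DBN--compatible order and then colouring its situations according to the parent sets. First I would fix a topological order: a permutation $I$ of the variables $\boldsymbol{\mathcal{Z}}$ that places any time--invariant variables first (in $I^{(r)}$) and, within each time--slice, respects the within--slice order used by the DBN, so that for every $j,t$ the whole of $pa(\mathcal{Z}_j(t))$ precedes $\mathcal{Z}_j(t)$ in the induced root--to--situation path order. Using $I$ I would form the finite trees $\mathcal{T}_{-1}(\boldsymbol{Z}(I))$ and $\mathcal{T}(\boldsymbol{Z}(I))$ as the product--space event trees of the time--invariant block and of one time--slice of the time--varying block respectively, and take the infinite event tree to be $\mathrm{TOG}(\mathcal{T}_{-1}(\boldsymbol{Z}(I)),\mathcal{T}(\boldsymbol{Z}(I)))$. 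Because a DBN carries no structural asymmetry, this product--space tree exactly represents the DBN's sample space, and by construction condition~1 of Definition~\ref{def:SDCEG} is satisfied.

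Next I would define the staging. A situation $s$ whose emanating edges are labelled by the values of $\mathcal{Z}_j(t)$ has, along its unique root--to--$s$ path, a fully determined configuration of all variables preceding $\mathcal{Z}_j(t)$ in $I$; in particular it determines a value of $pa(\mathcal{Z}_j(t))$. I would place two such situations in the same stage precisely when they carry the same variable $\mathcal{Z}_j$ at the same time--slice (or, in the time--homogeneous regime of a time--homogeneous DBN, at any time--slice) and agree on the values assigned to $pa(\mathcal{Z}_j(\cdot))$, giving them the common primitive probabilities $P(\mathcal{Z}_j(t)\mid pa(\mathcal{Z}_j(t)))$ supplied by the DBN. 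Since this procedure never merges situations attached to different variables, condition~2 of Definition~\ref{def:SDCEG} holds, so the staged tree is $\boldsymbol{\mathcal{Z}}$--stratified and the resulting DCEG is a $\boldsymbol{\mathcal{Z}}$--SDCEG.

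It then remains to check that this SDCEG depicts every conditional independence entailed by the ordered Markov property. I would first verify that the two models induce the same joint law: multiplying the primitive probabilities along any root--to--leaf path of the staged tree reproduces the DBN factorisation $\prod_{j,t}P(\mathcal{Z}_j(t)\mid pa(\mathcal{Z}_j(t)))$ evaluated at that configuration, because the floret distribution at each situation was set equal to the corresponding DBN conditional. I would then read off the conditional independences encoded by the colouring: the statement ``all $\mathcal{Z}_j(t)$--situations agreeing on $pa(\mathcal{Z}_j(t))$ lie in a common stage'' says exactly that the distribution of $\mathcal{Z}_j(t)$ given the whole preceding path history depends only on $pa(\mathcal{Z}_j(t))$, i.e.\ $\mathcal{Z}_j(t)\independent\{\text{variables earlier in }I\text{ not in }pa(\mathcal{Z}_j(t))\}\mid pa(\mathcal{Z}_j(t))$; collectively these are the family of CI statements generated by OMP relative to the chosen order (the dynamic analogue of the Smith--Anderson BN--to--CEG correspondence \cite{Smith.Anderson.2008}), and any OMP statement for a non--adjacent pair $v_i(t'),v_j(t)$ with $t'\leq t$ follows from them by the semi--graphoid axioms.

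The main obstacle is this last step: establishing that the stage--based conditional independence semantics of the staged tree faithfully recovers the graphoid closure of the OMP statements, rather than a strictly larger or incomparable set. The subtle points I expect to have to handle are (i) parent sets that reach back over several --- or, for a general DBN, arbitrarily many --- previous time--slices, which is exactly why the product--space/TOG construction must retain the entire path history; (ii) compatibility of the single repeated object $\Delta(\mathcal{T}(\boldsymbol{Z}(I)))$ with a DBN whose DAGs vary with $t$, which works because the TOG requirement constrains only the event tree (the sample space), while the colouring may legitimately differ across time--slices under condition~2; and (iii) checking that placing the time--invariant block first does not violate OMP, which holds since time--invariant variables have no parents outside that block. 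Once these are settled, the factorisation identity together with the graphoid argument closes the proof.
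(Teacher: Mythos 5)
Your proposal is correct and follows essentially the same route as the paper's own proof: build the product-space event tree in a DBN-compatible order, wrap it as a TOG, and stage situations by agreement on parent configurations (merging across time-slices when the conditional tables coincide), so that both conditions of Definition~\ref{def:SDCEG} hold and the colouring encodes exactly the OMP statements. Your discussion of the graphoid-closure step is in fact more explicit than the paper's, which simply asserts the final correspondence by construction.
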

\begin{proof}
See \ref{app:DBN_SCDEG}.
\end{proof}

\begin{myCorollary}
\label{the:2TDBN_2TSDCEG}
Every conditional independence statements defined by the ordered Markov property  in a $\text{$N\!$T-DBN}$ can be expressed in a \text{$N\!$T-SDCEG}.
\end{myCorollary}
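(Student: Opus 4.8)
The plan is to derive the corollary as a specialisation of Theorem~\ref{the:DBN_SDCEG}. Since an $N$T-DBN is in particular a DBN, that theorem already produces a DCEG whose staged tree is $\boldsymbol{\mathcal{Z}}$-stratified and which depicts exactly the conditional independence statements entailed by the ordered Markov property of the $N$T-DBN. What remains is purely structural: to check that, when the input carries the extra $N$T assumptions --- a Markov condition of order $N\!-\!1$ together with time-homogeneity --- the construction underlying Theorem~\ref{the:DBN_SDCEG} can be arranged so that the resulting staged tree is in addition time-homogeneous after time $N\!-\!1$ and has a supporting event tree of TOG form. Once this is established, Definition~\ref{def:SDCEG} and the graphical transformation described in Section~\ref{subsubsec:NTDCEG} identify the associated DCEG as an $N$T$\boldsymbol{\mathcal{Z}}$-SDCEG, which is the $N$T-SDCEG claimed.

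First I would record that an $N$T-DBN comes equipped with a common within-slice variable ordering $\mathcal{Z}_1,\ldots,\mathcal{Z}_s$ respected by every DAG $\mathbb{D}(0),\ldots,\mathbb{D}(N\!-\!2)$ and by the repeated DAG $\mathbb{D}$ (this is built into the semantics recalled in Section~\ref{subsec:BN_DBN}, since $pa(\mathcal{Z}_j(T))$ can only contain $\mathcal{Z}_1,\ldots,\mathcal{Z}_{j-1}$ within slice $T$), so the permutation $I$ required by Definition~\ref{def:SDCEG} may be taken to restrict on the per-slice variables to this ordering, with any once-observed variables placed in $\boldsymbol{\mathcal{Z}}^{(r)}$ (generically $r=0$ and $\mathcal{T}_{-1}$ empty). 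I would then build, exactly as in the proof of Theorem~\ref{the:DBN_SDCEG}, the finite product-space event tree $\mathcal{T}(\boldsymbol{Z}(I))$ over one time-slice and $\mathcal{T}_{-1}(\boldsymbol{Z}(I))$ over $\boldsymbol{\mathcal{Z}}^{(r)}$. Because each variable has the same state space in every slice, unfolding a copy of $\mathcal{T}(\boldsymbol{Z}(I))$ from every situation that represents a state at the start of a time-slice yields precisely a TOG$(\mathcal{T}_{-1}(\boldsymbol{Z}(I)),\mathcal{T}(\boldsymbol{Z}(I)))$; this gives condition~1 of Definition~\ref{def:SDCEG} and the required periodic support, irrespective of the initial DAGs $\mathbb{D}(0),\ldots,\mathbb{D}(N\!-\!2)$ differing from $\mathbb{D}$, since those differences affect only the staging and not the product-space structure of $\mathcal{T}$.

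Next I would transfer the colouring from the proof of Theorem~\ref{the:DBN_SDCEG}: a situation at the level of a variable $\mathcal{Z}_{i_k}$ in slice $t$ is given a colour determined solely by the values, read off its root-to-situation path, of the parents $pa(\mathcal{Z}_{i_k}(t))$ (with matching primitive probabilities), two such situations being merged into a stage exactly when those parent values agree. This makes every stage merge only situations of a common variable --- condition~2 of Definition~\ref{def:SDCEG} --- and faithfully reproduces the ordered Markov statements. For $t\geq N\!-\!1$ the set $pa(\mathcal{Z}_{i_k}(t))$ lies within the last $N\!-\!1$ slices by the order-$(N\!-\!1)$ Markov assumption and the conditional probability table attached to it is the same for all such $t$ by time-homogeneity; hence the coloured subtree, together with its primitive probabilities, hanging from any state-situation at the start of a slice $t\geq N\!-\!1$ is isomorphic to that hanging from a state-situation at the start of slice $N\!-\!1$. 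This is exactly time-homogeneity of the staged tree after $N\!-\!1$, so applying the $(N\!-\!1)$-position merge and the leaf-gathering rules of Section~\ref{subsubsec:NTDCEG} yields an $N$T$\boldsymbol{\mathcal{Z}}$-SDCEG depicting precisely the conditional independences of the given $N$T-DBN.

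The $\boldsymbol{\mathcal{Z}}$-stratified and conditional-independence parts of the argument are inherited wholesale from Theorem~\ref{the:DBN_SDCEG}, so the main obstacle is making the periodicity and time-homogeneity genuinely rigorous: one must argue carefully how the incoming history of a state-situation is encoded along its root-path, exhibit the isomorphism between the coloured subtrees at slices $N\!-\!1$ and $t>N\!-\!1$ as a consequence of combining the $(N\!-\!1)$-Markov property with time-homogeneity, and keep track of how far back parent sets reach so that the $(N\!-\!1)$-position identification indeed collapses the cyclic part of the graph without disturbing either the initial subgraph $\mathbb{D}_I$ or the $\boldsymbol{\mathcal{Z}}$-stratification. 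All remaining verifications are routine bookkeeping.
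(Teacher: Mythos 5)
Your proposal follows essentially the same route as the paper's own proof: invoke the construction from Theorem~\ref{the:DBN_SDCEG} to obtain a $\boldsymbol{\mathcal{Z}}$-stratified staged tree supported by a TOG, observe that the order-$(N\!-\!1)$ Markov condition and time-homogeneity of the $N$T-DBN make that staged tree time-homogeneous after time $N\!-\!1$, and then pass to $(N\!-\!1)$-positions to obtain the $N$T$\boldsymbol{\mathcal{Z}}$-SDCEG. Your write-up is in fact somewhat more explicit than the paper's (which cites only the Markov condition where both hypotheses are needed), but the argument is the same.
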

\begin{proof}
See \ref{app:2TDBN_2TSCDEG}.
\end{proof}

\begin{myExampleCont1}
\emph{
The 2T-DBN and the 2T$\boldsymbol{\mathcal{Z}}$-SDCEG corresponding to the radicalisation dynamic described in Example \ref{ex:Radicalisation_Dynamic_3variables} are given, respectively, in Figures \ref{fig:DBN_Radicalisation} and \ref{fig:2T-DCEG_Radicalisation_3var}. In this case, $\boldsymbol{\mathcal{Z}}=\{N,R,T\}$ and $\boldsymbol{Z(I)}=(N,R,T)$. Figure~\ref{fig:FiniteEventTree_Radicalisation} shows how to construct the event tree~$\mathcal{T}(\boldsymbol{{Z}}(I))$.  It can be easily verified that every conditional independence statement depicted in the \text{2T-DBN} 
is also showed in the 2T$\boldsymbol{\mathcal{Z}}$-SDCEG.}

\emph{However only the symmetric conditional independences exhibited in the 2T$\boldsymbol{\mathcal{Z}}$-SDCEG (Figure~\ref{fig:2T-DCEG_Radicalisation_3var}) can be graphically read from the 2T-DBN (Figure~\ref{fig:DBN_Radicalisation}). To illustrate this, take the variable Radicalisation~R. The context-specific conditional independences associated with this variable are directly depicted in the 2T$\boldsymbol{\mathcal{Z}}$-SDCEG. For instance, we can see from the graph that the probability of deradicalisation in time $t+1$ given that a prisoner has already adopted radicalisation in time $t,t \geq 1$, (positions $w_{19}$, $w_{20}$, $w_{21}$) is independent of his social contacts in the prison since positions $w_{19}$, $w_{20}$ and $w_{21}$ are coloured the same (red). This is not so for the 2T-DBN. Also observe that we can read from the 2T$\boldsymbol{\mathcal{Z}}$-SDCEG that the variable~$T$ is associated with a terminating event, although this kind of logical constraint cannot be directly read from the corresponding 2T-DBN. Due to these qualitative structures uncovered by the 2T$\boldsymbol{\mathcal{Z}}$-SDCEG, domain experts only needs to elicit 14 conditional probability distributions since there are 14 stages. On the other hand, they will have to elicit 21 conditional probability distributions in the corresponding BN framework if they ignore the hidden context-specific conditional independences}.
\end{myExampleCont1}

\section{Composite models}
\label{sec:Composite_Model}

An $N$T-DCEG model provides experts with a flexible and useful framework for composite model construction. Building on the algorithm for constructing a standard $N$T-DCEG model~\cite{Collazo.Smith.2018a}, Algorithm~\ref{alg:NTDCEG_algorithm} introduces a formal methodology for domain experts, decision makers and modellers to work in parallel and find a common ground that is coherent and consistency with their collective assumptions and degrees of belief. For this purpose, it is assumed that a set of time-invariant characteristics distinguishes different types of units observed in the system. Domain experts should use these time-invariant features to elicit the finite event tree~$\mathcal{T}_{\!-1}$ and then split the modelling responsibility between different panels. Note that each leaf of $\mathcal{T}_{\!-1}$ defines a particular type of units. Of course, if the set of units is homogeneous, then this algorithm cannot be used for composite model construction. However, in these cases a different type of domain information can be explored in order to adapt this methodology for composite model construction.

\begin{algorithm}[htbp]
  \DontPrintSemicolon
  
  
  \KwOut{Composite $N$T-DCEG model}
  
  Construct a finite event tree~$\mathcal{T}_{\!-1}$.\;
  
  Split the modelling work between expert panels using~$\mathcal{T}_{\!-1}$.\; 
  
  Each panel~$\mathscr{P}^i$ independently elicit an even tree~$\mathcal{T}^i$ corresponding to its subprocess.\;
  
  Discussion between panels to find a unique even tree~$\mathcal{T}$.\;
  
  Each panel~$\mathscr{P}^i$ defines an $N$T-DCEG graph~$\mathbb{C}^i$ associated with its subprocess that is supported by~$\mathcal{T}$  using the algorithm in~\cite{Collazo.Smith.2018a}.\;
  
  Discussion between panels to find a colour agreement between their different $N$T-DCEG~$\mathbb{C}^i$.
  
  Obtain the final $N$T-DCEG model~$\mathbb{C}$ using the algorithm in~\cite{Collazo.Smith.2018a}.\; 
    
  \Return{$\mathbb{C}$ and $P(\mathbb{C})$}\;
  
\caption{Composite $N$T-DCEG Algorithm \label{alg:NTDCEG_algorithm}}
\end{algorithm}              

Next each panel of experts~$\mathscr{P}^i$ needs to elicit a finite event tree~$\mathcal{T}^i$ that enables them to describe the dynamic process associated with the type of units under their responsibility. Since a fundamental assumption is that all units follows the same underlying dynamic, the different panels should gather and discuss the qualitative description of their subprocesses in order to obtain a single and common finite event tree~$\mathcal{T}$. Working first in small groups of experts focused on a more homogeneous set of units minimises the risk of paralysis due to many conflictive world views and stimulates a very detailed analysis of the process. After it is easier to obtain an aligned description of the process across panels since the set of event trees~$\mathcal{T}^i$ provides a rational base and a common language to discuss the different world views and understandings. In fact, we have observed that the panels tend to be more open-minded for other descriptions and try to incorporate events that they may previously neglect.

Using the algorithm introduced in~\cite{Collazo.Smith.2018a}, each panel~$\mathscr{P}^i$ can now construct an $N$T-DCEG~$\mathbb{C}^i$ supported by a TOG($\mathcal{T}_{\!-1}^i,\mathcal{T}$), where $\mathcal{T}_{\!-1}^i$ identifies the different types of units under the modelling responsibility of~$\mathscr{P}^i$. At this step each panel will elicit the stage structure and the conditional probabilities that drive their units over time in the observed system. After the panels need to compare the stage structure of their $N$T-DCEG models~$\mathbb{C}^i$ in order to verify if stages from different~$\mathbb{C}_i$ can be merged and so find a colour agreement. Here the concept of $T$-position is very important since it enforces a common graphical structure between the set of $N$T-DCEGs~$\mathbb{C}^i$. Note that the refinement of the conditional probability distributions~$P(\mathbb{C}^i)$ elicited in step~$5$ is postponed for the last step when the experts review them using the composite $N$T-DCEG graph~$\mathbb{C}$ and define~$P(\mathbb{C})$. The example below illustrates the composite model construction for the radicalisation dynamic.         

\begin{myExample}
	\label{ex:NTDCEG_Radicalisation_Covariate}
	\emph{
		Assume that Example~\ref{ex:Radicalisation_Dynamic_3variables} refers to a British prison. Now suppose that the prison manager contracted a group of experts to gain some insight about the radicalisation dynamic over time. He told experts that he believes that factors such as prisoners' previous convection and nationality appears to drive this process. The experts decide to use the $N$T-DCEG framework as prescribed in Algorithm~\ref{alg:NTDCEG_algorithm}. Based on the manager's description they constructed the event tree $\mathcal{T}_{\!-1}$ depicted in Figure~\ref{fig:2T-DCEG_Radicalisation_Covariate} that divides the inmates in four heterogeneous groups as a function of two time-invariant features:
		\begin{itemize}[nosep]
			\item Conviction, a categorical variable signalising the existence of a prior convection ($y$ - Yes) or not ($n$ - No); and
			\item Nationality, a categorical variable distinguishing between a British inmate ($b$) and a foreigner ($f$).   
		\end{itemize}
	}

	\vspace{-30pt}
	\begin{figure}[h!]
		\begin{center}
			\includegraphics[scale=0.75,angle=-90,origin=c,trim=0 0 0 -150]{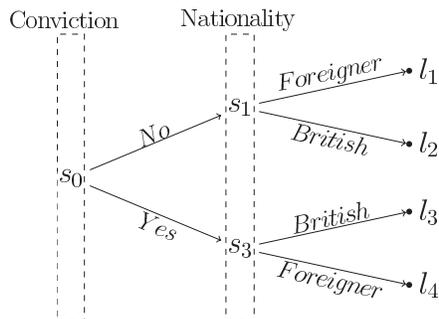}
		\end{center}
		\vspace{-39pt}
		\caption{The time-invariant event tree~$\mathcal{T}_{\!-1}$ associated with Example~\ref{ex:NTDCEG_Radicalisation_Covariate}. \label{fig:covariate_event_tree}}
		\vspace{-7pt}
	\end{figure}
	
	\emph{
		The experts then split the modelling task into two panels according to the variable~$Conviction$: 
		\begin{itemize}[nosep]
			\item Panel 1 - these experts are responsible for modelling the radicalisation process associated with prior convicted inmates.
			\item Panel - the experts are in charge of modelling the radicalisation dynamic observed in groups of prisoners that have no prior conviction.
		\end{itemize}
		Each panel started their work and independently concluded that the event tree showed in Figure~\ref{fig:FiniteEventTree_Radicalisation} is appropriate to represent the process at each time-slice. Next each panel separately realized that the prison dynamic is independent of prisoners' nationality since this process can be expected to be driven by cultural and social factors. The nationality is then not an appropriate explanatory variable. Panel~$1$  assumed that the radicalisation process of a prior convicted prisoner is represented by the 2T-DCEG depicted in Figure~\ref{fig:2T-DCEG_Radicalisation_3var} which embeds all the hypotheses described in Example~\ref{ex:Radicalisation_Dynamic_3variables}.
	 }
		
	\emph{
		Panel~$2$ found that the $2$T-DCEG shown in Figure~\ref{fig:2T_DCEG_Radicalisation_Female} represents the radicalisation dynamic of a prisoner having no criminal history. In line with domain experts' information, this structure is simpler than that of a prisoner with previous convictions since additional context-specific conditional independences to those hypothesised in Example~\ref{ex:Radicalisation_Dynamic_3variables} apply to this case.
	}
	
	\vspace{13pt}
	\begin{figure}[!ht]
		\vspace{15pt}
		\centering
		\begin{subfigure}{.56\textwidth}
			\vspace{-5pt}
			\begin{center}
				$\quad$
				\includegraphics[scale=0.83,angle=-90,origin=c,trim=80 160 0 0]
				{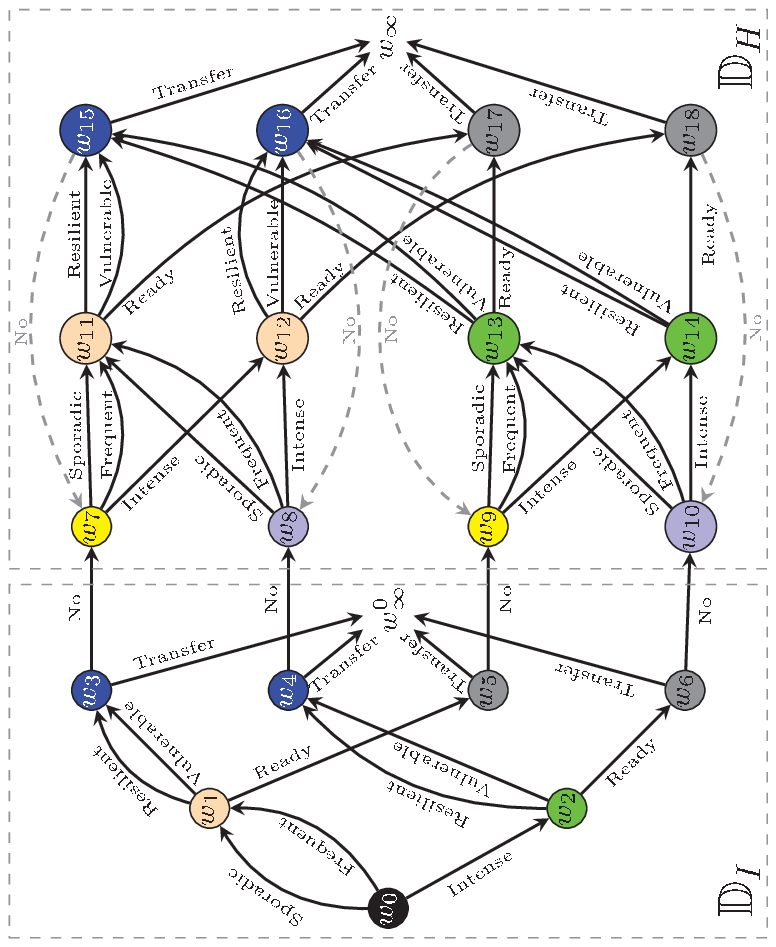}
			\end{center}
			\vspace{45pt}
			\caption{
				2T-DCEG. The stage structure is defined as following: ${u_a=\{w_0\}}$, $u_b=\{w_1\}$, $u_c=\{w_2,w_{13},w_{14}\}$, $u_d=\{w_3,w_4,w_{15},w_{16}\}$, $u_e=\{w_5,w_6,w_{17},w_{18}\}$, $u_f\!\!=\!\!\{w_7,w_9\}$, ${u_g\!\!=\!\!\{w_{8},w_{10}\}}$.
				 \label{fig:2T_DCEG_Radicalisation_Female}}
		\end{subfigure}%
		\begin{minipage}{.02\textwidth}
			$ $
		\end{minipage}
		\begin{subfigure}{.365\textwidth}
				\begin{center}
					$\!\!\!\!\!\!\!\!\!\!\!\!\!\!\!\!\!\!\!\!\!\!\!\!\!\!\!\!\!\!\!\!\!\!\!\!\!\!\!$
					\includegraphics[scale=0.75,angle=0,origin=c,trim=0 150 0 -45]
					{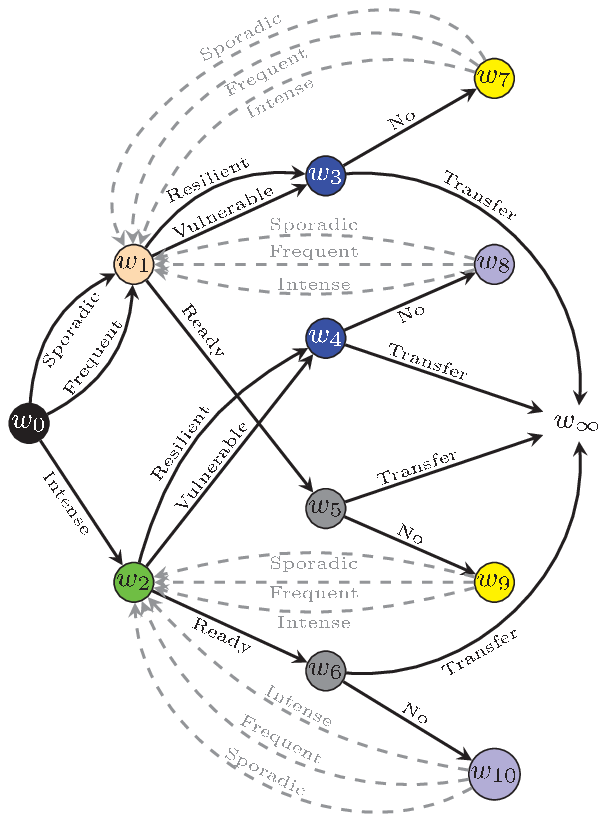}
				\end{center}
				\vspace{70pt}
				\caption{DCEG. 
					The stage structure is given by: ${u_a=\{w_0\}}$, $u_b=\{w_1\}$, ${u_c=\{w_2\}}$, ${u_d=\{w_3,w_4\}}$, ${u_e=\{w_5,w_6\}}$, ${u_f\!\!=\!\!\{w_7,w_9\}}$, ${u_g\!\!=\!\!\{w_{8},w_{10}\}}$. 
					\label{fig:DCEG_Radicalisation_Female}}
				\vspace{10pt}
			\end{subfigure} 
		\vspace{-10pt}
		\caption[The 2T-DCEG and DCEG associated with the radicalisation process]{The 2T-DCEG and a DCEG corresponding to the radicalison process of prisoners without prior criminal convictions in Example~\ref{ex:NTDCEG_Radicalisation_Covariate}.
		\label{fig:DCEGs_Radicalisation_Female}}
		\vspace{-7pt}
	\end{figure}
				
	\emph{
		Figure~\ref{fig:DCEG_Radicalisation_Female} depicts the DCEG equivalent to the 2T-DCEG presented in Figure~\ref{fig:2T_DCEG_Radicalisation_Female} in a sense that both models embed the same set of conditional independence statements. Note that in the DCEG graph some situations in the initial time-slice are in positions that also aggregate situations that unfold in the subsequent time-slices. This gives a very simple graph with only four levels. However, this simplification has some drawbacks in terms of the} readability \emph{of the conditional independences represented by the DCEG model, which do not happen in the ${2\text{T-DCEG}}$ framework.
		}
		
	\emph{
		For example, position~$w_2$ of the DCEG corresponds to an inmate who has intense social contacts with other radicalised prisoners during the initial time-slice. It also merges situations in time-slice~$t$, ${t=1,2,\ldots}$, that correspond to a radicalised inmate in the previous time-slice $t\!-\!1$ who remains in prison at the current time-slice~$t$. This last statement cannot be read immediately from the DCEG topology. In contrast, we can read this statement directly from the ${2\text{T-DCEG}}$ topology if we look at positions~$w_{13}$ and~$w_{14}$ since its position~$w_2$ only represents the first type of inmates. Note that these three positions are at the same stage~$u_c$ of the ${2\text{T-DCEG}}$. Adopting the concept of a $1$-position has thus enabled us to disentangle the initialisation process showed in subgraph~$\mathbb{D}_I$ from the transition process presented in subgraph~$\mathbb{D}_H$. The subgraph~$\mathbb{D}_I$ associated with the initial time-slice can then act as a legend to analyse the subsequent time-homogeneous time-slices depicted in the subgraph~$\mathbb{D}_H$.
	}

	\emph{
		Observe now that it is easier to compare the radicalisation processes of an inmate with a prior or a non-prior conviction using the 2T-DCEG. Assume that experts of panels~$1$ and~$2$ agreed to merge only the following three pairs of stages: $u_c$ and $u_{9}$; $u_e$ and $u_{14}$; $u_g$ and $u_6$. Also suppose that they accept that the radicalisation risk  of a non-prior convicted prisoner is lower than a prior convicted prisoner with similar behaviour pattern in the prison. It is straightforward to verify that merging the 2T-DCEGs depicted in Figures~\ref{fig:2T-DCEG_Radicalisation_3var} and~\ref{fig:2T_DCEG_Radicalisation_Female} under these assumptions we obtain the 2T-DCEG illustrated in Figure~\ref{fig:2T-DCEG_Radicalisation_Covariate}. 
	}
	
	\begin{figure}[ht!]
		\begin{center}
			\includegraphics[scale=0.619,origin=c,trim=100 0 0 -111]
			{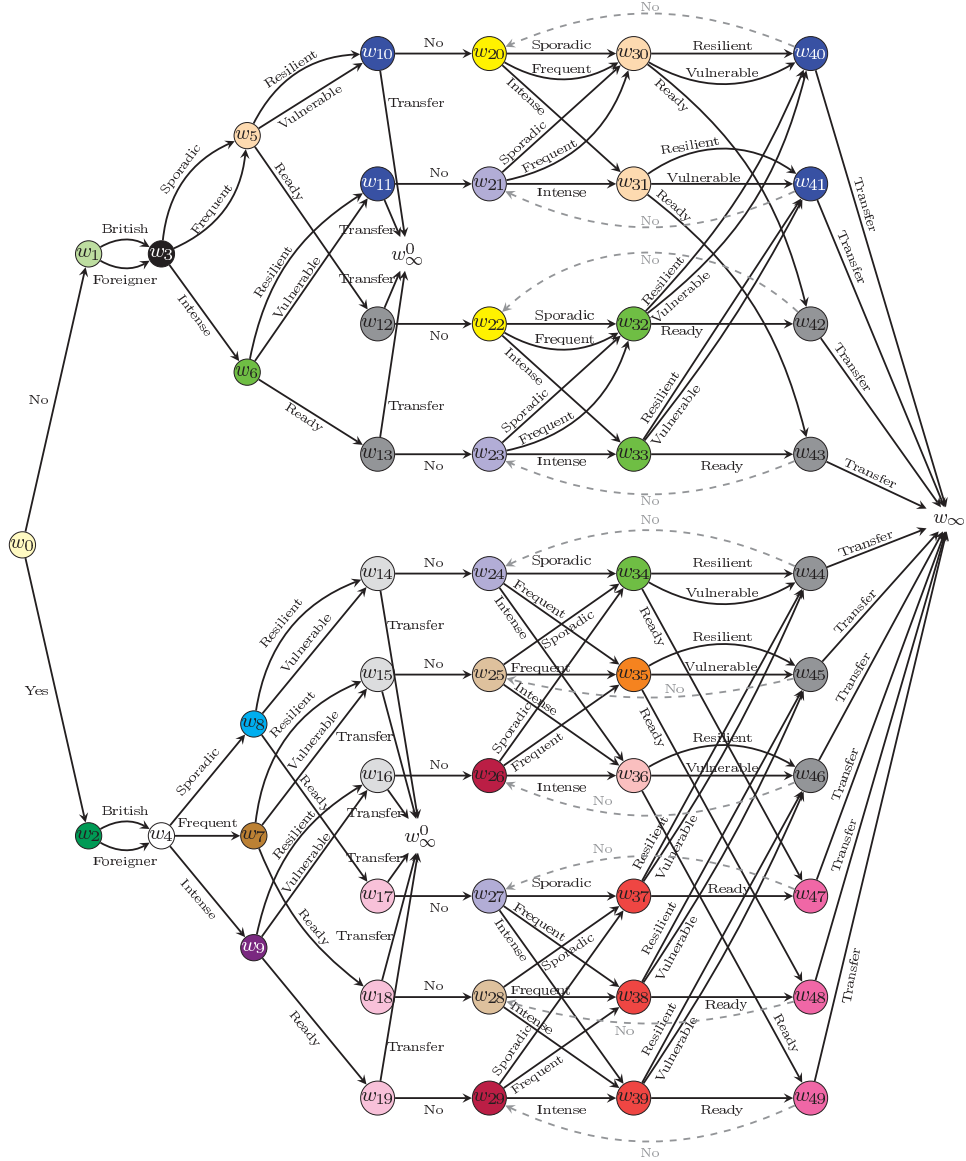}
		\end{center}
		\vspace{-17pt}
		\caption[A 2T-DCEG with time-invariant variables]{
			The 2T-DCEG associated with Example~\ref{ex:NTDCEG_Radicalisation_Covariate}. 
			The stage structure is defined as following: ${u_0=\{w_0\}}$, ${u_1=\{w_1\}}$, ${u_2=\{w_2\}}$, $u_3=\{w_3\}$, $u_4=\{w_4\}$, $u_5=\{w_5,w_{30},w_{31}\}$, ${u_6=\{w_6,w_{32},w_{33}\}}$, ${u_7=\{w_{7}}\}$, $u_8=\{w_{8}\}$, $u_9=\{w_{9}\}$, $u_{10}=\{w_{10},w_{11},w_{40},w_{41}\}$, ${u_{11}=\{w_{12},w_{13},w_{42},\ldots,w_{46}\}}$, ${u_{12}=\{w_{14},w_{15},w_{16}\}}$, $u_{13}=\{w_{17},w_{18},w_{19}\}$, ${u_{14}=\{w_{20},w_{22}\}}$, ${u_{15}=\{w_{21},w_{23},w_{24},w_{27}\}}$,  ${u_{16}=\{w_{25},w_{28}\}}$, ${u_{17}=\{w_{26},w_{29}\}}$, ${u_{18}=\{w_{30},w_{31}\}}$, ${u_{19}=\{w_{32},w_{34}\}}$, ${u_{20}=\{w_{35}\}}$, ${u_{21}=\{w_{36}\}}$, ${u_{22}=\{w_{37},w_{38},w_{39}\}}$, ${u_{23}=\{w_{47},w_{48},w_{49}\}}$.
			Hotter colours implies higher risk of radicalisation.
			\label{fig:2T-DCEG_Radicalisation_Covariate}}
		\vspace{-7pt}
	\end{figure}
	
\end{myExample}


\section{Constructing random variables}
\label{sec:Constructiong_Random_Variable}

Sometimes an SDCEG~$\mathbb{C}$ can also be described by a context-specific DBN. In this case a useful class of random variables is one taking its levels as positions that are equally distant from the root position in~$\mathbb{C}$. However especially when its tree is asymmetric such random variables are not the only or even the most important class of random variables that can be constructed from an ${\text{DCEG}}$.

In this section we will present two constructions of random variables intrinsically associated with an ${N\text{T-DCEG}}$~$\mathbb{C}\!\!=\!\!(V,E)$. We will then show how particularly useful conditional independences can be defined between them. This will first require us to extend the concepts of cut and fine cut from a CEG~(\cite{Smith.Anderson.2008}) so that they can be interpreted analogously in an ${N\text{T-DCEG}}$. 

For an $N$T-DCEG $\mathbb{C}$, define the following finite set of positions associated with the temporal edges from time~$t, t=-1,\ldots,N-2$, to time~${t+1}$:
\vspace{-9pt}
$${\mathcal{W}_{Tail}^t = \{w(t);(w(t),w_a(t+1)) \in E_{\dagger},  \text{ for some } w_a(t+1)\}} \text{ and}\vspace{-9pt}$$
$${\mathcal{W}_{Head}^{t+1} = \{w(t+1);(w_a(t),w(t+1)) \!\in\! E_{\dagger},  \text{ for some } w_a(t)\}}.\vspace{-5pt}$$
For $t=N-1,N,\ldots$, fix ${\mathcal{W}_{Tail}^t=\mathcal{W}_{Tail}}$, ${\mathcal{W}_{Head}^t=\mathcal{W}_{Head}}$ and $w_\infty^t=w_\infty$.
Take the graph $\mathbb{C}^- \!\!=\! (V^-,E^-)$, where $V^- \!\!=\!\! V$ and $E^- \!\!=\!\! E \!-\! E_\text{\circled{$\dagger$}}$, and let ${\mathcal{W}_{Tail}^{t(*)}=\mathcal{W}_{Tail}^t \cup \{w_\infty^t\}}$. Now let $\Lambda_t^{cut}$, ${t=0,1,\ldots}$, denote the set of all \text{$w_0$-to-$\mathcal{W}_{Head}^t$-to-$\mathcal{W}_{Tail}^{t(*)}$} paths in $\mathbb{C}^-$. For $t=-1$, when $\mathcal{T}_{-1}\neq\emptyset$, and for $t=0$, when $\mathcal{T}_{-1}=\emptyset$, let $\Lambda_t^{cut}$ be the set of all ${w_0\text{-to-}\mathcal{W}_{Tail}^{t(*)}}$ paths in $\mathbb{C}^-$. 
	
	\begin{myDef}
		\label{def:Cut}
		In an $N$T-DCEG~$\mathbb{C}$, a \textbf{cut} $\mathcal{U}_t^{cut}$, $t=0,1,\ldots$, is a set of stages such that all paths in $\Lambda_t^{cut}$ pass through exactly one position $w(t) \in u$, for some $ u \in \mathcal{U}_t^{cut}$. Let $\mathcal{U}^{cut}$ denote any of the identical cuts $\mathcal{U}_t^{cut}$, $t=N-1,N,\ldots$. 
	\end{myDef}
	\vspace{0pt}
	
	\begin{myDef}
		\label{def:Fine_cut}
		In an $N$T-DCEG~$\mathbb{C}$, a \textbf{fine cut} $\mathcal{W}_t^{cut}$, $t=-1,0,\ldots$, is a set of positions $w(t)$ such that all paths in $\Lambda_t^{cut}$ pass through exactly one position ${w(t) \in \mathcal{W}_t^{cut}}$. Let $\mathcal{W}^{cut}$ denote any of the identical cuts $\mathcal{W}_t^{cut}$, $t=N-1,N,\ldots$.
	\end{myDef} 
	
Define a function a $h$ such as $h(x)=0$, if $x\leq N-1$, and $h(x)=x-N+1$, otherwise. Let $\Lambda(\mathcal{U}_t^{cut})=\cup_{u \in \mathcal{U}_t^{cut}}\Lambda(u,t)$, where $\Lambda(u,t)$ denotes the set of all walks ${\lambda=(w_{0},\ldots,w{(t)}) \subset \mathbb{C}}$, such that $w(t) \in u$  and each walk~$\lambda$ passes through cycle temporal edges exactly $h(t)$ times. 
Remember that in a graph~$\mathbb{G}=(V_\mathbb{G},E_\mathbb{G})$ a walk corresponds to a non-empty sequence of vertices~$(v_{i_1},v_{i_2},\ldots,v_{i_L})$, such that the vertices are not necessarily distinct and every edge $(v_{i_k},v_{i_{k+1}})$, $k=0,\ldots,L\!-\!1$, is in~$E_\mathbb{G}$~\cite{Cowell.etal.2007,Diestel.2006}. Also let $\mathcal{E}(u)$ and $\mathcal{E}(\mathcal{U}_t^{cut})$ denote the set of events that can happen immediately after a unit arriving, respectively, at a particular stage~$u$ and at any stage in a cut~$\mathcal{U}_t^{cut}$. We can now introduce three useful random variables that can be constructed from the cut $\mathcal{U}^{cut}$ taking values over time-slices~$t, t=-\!1,0,\ldots$. These are defined as follows: 
	\begin{enumerate}[nosep]
		\item $X(\mathcal{U}_t^{cut})$ is defined to be a random variable whose state space is the set $\mathbb{X}(\mathcal{U}_t^{cut})=\{1,2,\ldots,|\mathcal{E}(\mathcal{U}_t^{cut})|\}$, such that there exists a bijection ${\zeta_{X(\mathcal{U}_t^{cut})}:\mathbb{X}(\mathcal{U}_t^{cut}) \to \mathcal{E}(\mathcal{U}_t^{cut})}$. Its probability mass function $\pi_{X}(x)$ is given by
		\vspace{-5pt} 
		\begin{equation}
		\label{eq:X_cut_mass_function}
		\pi_{X}(x) \propto \!\!\!
		\sum_{\lambda \in \Lambda_x(\mathcal{U}_t^{cut})} \!\!\! \!\!
		\pi (w'(l(\lambda))=\zeta_{X}(x)|l(\lambda))
		\prod_{\substack{w \in \lambda \\ w \neq l(\lambda)}} \!\!\! \pi(w'(w)|w),
		\, x \in \mathbb{X}(\mathcal{U}_t^{cut}),
		\vspace{0pt}
		\end{equation}
		where $w'(w)$ is the successor of $w$ in $\lambda$, 
		$l(\lambda)$ is the last position of a directed walk~$\lambda$, and $\Lambda_x(\mathcal{U}_t^{cut})$ is the set of all walks
		$\lambda \in \Lambda(\mathcal{U}_t^{cut})$, such that the event $\zeta_{X(\mathcal{U}_t^{cut})}(x)$ can unfold from $\lambda$ in $\mathbb{C}$.
		
		\item ${Q}(\mathcal{U}_t^{cut})$ is defined to be a random variable whose state space is given by the set ${\mathbb{Q}(\mathcal{U}_t^{cut})=\{1,2,\ldots,|\mathcal{U}_t^{cut}|\}}$, such that there is a bijection $\zeta_{Q(\mathcal{U}_t^{cut})}:\mathbb{Q}(\mathcal{U}_t^{cut}) \to \mathcal{U}_t^{cut}$. The probability mass function  $\pi_{Q}(q)$ is proportional to the sum of all the monomials in primitives associated with $\lambda \!\in\! \Lambda(u,t)$, where $u=\zeta_{Q(\mathcal{U}_t^{cut})}(q)$. So explicitly we have that
		\vspace{-5pt}
		\begin{equation}
		\label{eq:Q_cut_mass_function}
		\pi_{Q}(q) \propto 
		\sum_{\lambda \in \Lambda(\zeta_Q(q),t)}
		\prod_{\substack{w \in \lambda \\ w \neq l(\lambda)}} 
		\pi(w'(w)|w), \quad q \in \mathbb{Q}(\mathcal{U}_t^{cut}).
		\vspace{-5pt}
		\end{equation}
		
		\item $Z(\mathcal{U}_t^{cut})$ is the upstream random variable of  $\mathcal{U}_t^{cut}$ in $\mathbb{C}$ whose state space is 
		defined by the set ${\mathbb{Z}(\mathcal{U}_t^{cut})=\{1,2,\ldots,|\Lambda(\mathcal{U}_t^{cut})|\}}$, such that there is a bijection $\zeta_{Z(\mathcal{U}_t^{cut})}:\mathbb{Z}(\mathcal{U}_t^{cut}) \to \Lambda(\mathcal{U}_t^{cut})$. Its probability mass function $\pi_Z(z)$ is given by
		\vspace{-5pt}
		\begin{equation}
		\label{eq:Z_cut_mass_function}
		\pi_{Z}(z) \propto
		\prod_{\substack{w \in \lambda \\ w \neq l(\lambda)}}
		\pi(w'(w)|w),
		\quad z \in \mathbb{Z}(\mathcal{U}_t^{cut}),
		\vspace{-5pt}
		\end{equation}
		where $\lambda=\zeta_{Z(\mathcal{U}_t^{cut})}(z)$.
	\end{enumerate}
	
	Note that `$=$' can replace `$\propto$' in the three equations above if the $N\text{T-DCEG}$ does not have a sink position~$w_\infty$. Let $\mathbb{X}(u)=\{1,2,\ldots,|\mathcal{E}(u)|\}$ be the state space of the usual random variable~$X(u)$ associated with a stage $u$. From the constructions above we can now immediately recover each random variable $X(u)$, $u \in \mathcal{U}_t^{cut}$, as follows 
	\begin{equation}
	\pi(X(\mathcal{U}_t^{cut})=x|Q(\mathcal{U}_t^{cut})=q) = \left\{
	\begin{array}{rl}
	\pi(X(u_q)=x|u_q) & \text{if } \zeta_{X(\mathcal{U}_t^{cut})}(x) \in \mathcal{E}(u_q) ,\\
	0 &\text{if } \zeta_{X(\mathcal{U}_t^{cut})}(x) \notin \mathcal{E}(u_q),
	\end{array} \right.
	\end{equation}
	where $u_q=\zeta_{Q(\mathcal{U}_t^{cut})}(q)$.
	
	Theorem \ref{the:cut_independence} below tells  us that the actual state of a process given by a stage $u$ determines its immediate development regardless of the possible unfolding walk taken by a unit from the root position to a position $w \in u$. Furthermore, equation~\ref{eq:cut_independence_function} guarantees that this is the only conditional independence statement that can be read between an downstream and upstream variables ${X}(\mathcal{U}_t^{cut})$ and $Z(\mathcal{U}_t^{cut})$ measurable with respect to an $N\text{T-DCEG}$. 
	
	\begin{myTheorem}
		\label{the:cut_independence}
		Take a cut $\mathcal{U}_t^{cut}$, $t=-1,0,1,\ldots$, in an $N$T-DCEG $\mathbb{C}$. Then
		\begin{equation}
		{X}(\mathcal{U}_t^{cut})
		\independent
		{Z}(\mathcal{U}_t^{cut})
		|
		{Q}(\mathcal{U}_t^{cut}).
		\label{eq:cut_independence}
		\end{equation}
		Additionally, if a function $f({Z}(\mathcal{U}_t^{cut}))$ satisfies
		\vspace{-5pt}
		\begin{equation}
		{X}(\mathcal{U}_t^{cut})
		\independent
		{Z}(\mathcal{U}_t^{cut})
		|
		f({Z}(\mathcal{U}_t^{cut})),
		\label{eq:cut_independence_function}
		\vspace{-5pt}
		\end{equation}
		then ${Q}(\mathcal{U}_t^{cut})$ is a function of $f({Z}(\mathcal{U}_t^{cut}))$  with probability one. These results also hold when a cut $\mathcal{U}_T^{cut}$ is defined in a CEG $\mathbb{C}_t \in \mathcal{F}(\mathbb{C})$, $t=T,T+1,\ldots$.
	\end{myTheorem}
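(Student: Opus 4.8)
The plan is to exploit two facts: that the cut variable $Q(\mathcal{U}_t^{cut})$ is a deterministic function of the upstream variable $Z(\mathcal{U}_t^{cut})$, and that conditioning on a particular walk affects the next event only through the stage at which that walk terminates. First I would record that $Q(\mathcal{U}_t^{cut})$ is indeed a function of $Z(\mathcal{U}_t^{cut})$: a state $z$ of $Z(\mathcal{U}_t^{cut})$ is a walk $\lambda=\zeta_{Z}(z)$ in $\Lambda(\mathcal{U}_t^{cut})$, and since the positions of a DCEG are partitioned by the stages, the terminal position $l(\lambda)$ lies in exactly one stage of the cut; writing $q(z)$ for the index of that stage gives $Q(\mathcal{U}_t^{cut})=q(Z(\mathcal{U}_t^{cut}))$ pointwise (the cut property of Definition~\ref{def:Cut} is what makes the induced marginals coherent, i.e. ensures that every unit reaching the horizon passes through exactly one such position). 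Next I would make explicit the joint law of $(Z,Q,X)$ carried by \eqref{eq:X_cut_mass_function}--\eqref{eq:Z_cut_mass_function}: $P(Z=z)=\pi_Z(z)$, $Q=q(Z)$, and $P(X=x\mid Z=z)$ equals the primitive probability of the edge leaving $l(\lambda)$ with label $\zeta_{X}(x)$, or $0$ when no such edge exists. Because the primitives out of $l(\lambda)$ already sum to one this is a genuine conditional distribution, and the sink $w_\infty$ enters only through the normalising constants of the marginals; a one-line check then shows the marginals of $X$, $Q$ and $Z$ induced by this joint law coincide with \eqref{eq:X_cut_mass_function}, \eqref{eq:Q_cut_mass_function} and \eqref{eq:Z_cut_mass_function} respectively.

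With this set-up, \eqref{eq:cut_independence} is immediate. Because $Q$ is a function of $Z$, it suffices to see that $P(X=x\mid Z=z)$ depends on $z$ only through $q(z)$; but $l(\lambda)$ lies in stage $u_{q(z)}$, and by the definition of a stage every position in $u_{q(z)}$ carries the same primitive probability vector (under the bijection identifying their edge labels), so $P(X=x\mid Z=z)=\pi(X(u_{q(z)})=x\mid u_{q(z)})$ on $\mathcal{E}(u_{q(z)})$ and $0$ elsewhere. This is exactly the expression already displayed for $\pi(X(\mathcal{U}_t^{cut})=x\mid Q(\mathcal{U}_t^{cut})=q)$, so $P(X\mid Z)=P(X\mid Q)$ almost surely, which is precisely $X(\mathcal{U}_t^{cut})\independent Z(\mathcal{U}_t^{cut})\mid Q(\mathcal{U}_t^{cut})$.

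For the minimality half, suppose $f(Z(\mathcal{U}_t^{cut}))$ satisfies \eqref{eq:cut_independence_function}. Since $f(Z)$ is $\sigma(Z)$-measurable, the standard characterisation of conditional independence gives $P(X\mid Z)=P(X\mid f(Z))$ with probability one; hence, for any two positive-probability walks $\lambda_1,\lambda_2$ with $f(z_1)=f(z_2)$, the laws $P(X=x\mid Z=z_1)$ and $P(X=x\mid Z=z_2)$ agree on $\mathbb{X}(\mathcal{U}_t^{cut})$. By the computation of the previous paragraph this forces the primitive probability vectors at $l(\lambda_1)$ and $l(\lambda_2)$ to coincide under the domain bijection, so these positions lie in the same stage and $q(z_1)=q(z_2)$. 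Therefore $q$ factors as $g\circ f$ on the positive-probability set for a well-defined $g$, which is the assertion that $Q(\mathcal{U}_t^{cut})$ is a function of $f(Z(\mathcal{U}_t^{cut}))$ with probability one. The corresponding statement for a CEG $\mathbb{C}_t\in\mathcal{F}(\mathbb{C})$ follows by running the same argument on the staged tree truncated at time-slice $t$ (equivalently, by appealing to the original cut result for CEGs in \cite{Smith.Anderson.2008}), since truncation preserves both the cut property and the staging.

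The one genuinely delicate point, I expect, is the step in the converse that passes from $P(X\mid Z=z_1)=P(X\mid Z=z_2)$ to ``$l(\lambda_1)$ and $l(\lambda_2)$ lie in the same stage''. This is immediate under the standard convention that all primitive probabilities are strictly positive, since then the support of $P(X\mid Z=z_i)$ recovers $\mathcal{E}(u_{q(z_i)})$ exactly and equality of the laws forces equality of the full primitive vectors; if zero primitives are allowed, two distinct stages could differ only on null events and the conclusion would have to be read modulo that degeneracy. Everything else is bookkeeping with the definitions of stage, position and cut.
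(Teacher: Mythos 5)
Your proof is correct and follows essentially the same route as the paper's: the forward direction rests on the fact that the stage reached determines the one-step transition law (so $P(X\mid Z)=P(X\mid Q)$), the converse is the same contradiction argument (two positive-probability walks with $f(z_1)=f(z_2)$ terminating in distinct stages would violate the assumed conditional independence), and the CEG case is handled by the same appeal to \cite{Smith.Anderson.2008} together with Theorem~5 of the companion paper. You are in fact slightly more explicit than the paper at the one delicate step --- justifying that distinct stages force distinct conditional laws of $X(\mathcal{U}_t^{cut})$ via the maximality built into the stage definition, with the positivity caveat --- which the paper's proof simply asserts.
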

	\begin{proof}
		See \ref{app:cut_independence}.
	\end{proof}
	
	Analogously to the BN framework, these constructions now enable us to identify conditional independence structures embedded within an ${N\text{T-DCEG}}$ that hold for all values of conditioning variables. Despite the probability mass function of each variable associated with a cut~$\mathcal{U}_t^{cut}$ often being different over time~$t$, ${t=N-1,N,\ldots}$, the collection of conditional independence statements that can be read from them is nevertheless equivalent. This happens because by Definition~\ref{def:Cut} each cut~$\mathcal{U}_t^{cut}$, ${t=N-1,N,\ldots}$, corresponds to the same set of positions in an $N$T-DCEG~$\mathbb{C}$. This assertion is also valid for every CEG~$\mathbb{C}_t \in \mathcal{F}(\mathbb{C})$, ${t=N-1,N,\ldots}$, since each time-slice~$t$ in~$\mathbb{C}_t$ has the same stage structure as that of the subgraph $\mathbb{D}_H \subset \mathbb{C}$ (\cite[Theorem~5]{Collazo.Smith.2018a}). The concept of cut is illustrated in the example below. Note that some care is needed in reading conditional independences associated with the time-slice~${N\!-\!1}$ because a path in the initial graph~$\mathbb{D}_I$ may have probability zero. For an example about this case see~\cite[Section~7.7]{Collazo.2017} and~\cite{Collazo.Smith.2017}.
	
	\begin{myExampleCont1}
		\label{ex:cut}
		\emph{
			Recall the 2T-DCEG of Figure \ref{fig:2T-DCEG_Radicalisation_3var}. Take the cut $\mathcal{U}_t^{cut}=\{u_{13},u_{14}\}$ for $t=1,2,\ldots$. The variable ${X}(\mathcal{U}_t^{cut})$ then corresponds to the initial variable Transfer. The variable ${Q}(\mathcal{U}_t^{cut})$ whose state space is given by $\mathbb{Q}(\mathcal{U}_t^{cut})=\{1,2\}$, such that $\zeta_{Q(\mathcal{U}_t^{cut})}(1)=u_{13}$ and $\zeta_{Q(\mathcal{U}_t^{cut})}(2)=u_{14}$, provides us an reinterpretation of the initial variable Radicalisation~R. In this case the variable ${Q}(\mathcal{U}^{cut})$ tell us that the variable~R can be collected and analysed as a binary variable $R^*$ that identifies whether a prisoner has adopted radicalisation ($Q=2$) or not ($Q=1$). We then have that}
		\begin{equation}
		T(t+1) \independent \boldsymbol{A}|(R^*(t+1),T(t)=n), \qquad t=0,1,\ldots,
		\end{equation}
		\emph{
			$\!$where $\boldsymbol{A}=(N(t+1),\ldots,N(0),T(t-1),\ldots,T(0),R(t),\ldots,R(0))$.
			Theorem~\ref{the:cut_independence} also guarantees that there is no information gain using the variable~R with three categories to predict the probability of an inmate to be transfer to another prison once we have observed the variable ${Q}(\mathcal{U}^{cut})$.}
	\end{myExampleCont1}
	
	A cut allows us to describe conditional independences concerning developments 1-step ahead of situations in a staged tree. However, if a unit's developments over the next $s$ time steps are of interest then we need to use an extended definition of fine cut to accommodate the time window~$s$ within which the present can affect the future. For this purpose, let $\mathcal{W}^{cut(s)}_t$ be the set of positions corresponding to $\mathcal{W}_t^{cut}$ when the focus is on $s$ time steps ahead from the actual time. These new definitions are particular important because now a fine cut provides us with a framework to identify global conditional independence structures that naturally arises from an $N\text{T-DCEG}$.
	
	Note that when the time window $s$ is greater than $N\!-\!2$ all necessary information to define a random variable associated with a fine cut can be obtained in a straightforward way from the $N\text{T-DCEG}$~$\mathbb{C}$: ${\mathcal{W}_t^{cut(s)}=\mathcal{W}_t^{cut}}$. On the other hand, for shorter~$s$ we also need to use the CEG ${\mathbb{C}_{g(t)+s} \in \mathcal{F}(\mathbb{C})}$ to define these variables, where $g$ is a function such as ${g(x)=x}$, if ${x \leq N\!-\!2}$, and ${g(x)=N\!-\!1}$, otherwise. This is because the position set $\mathcal{W}_t^{cut(s)}$ associated with the current time $t$ may be a coarser partition of situations than $\mathcal{W}_t^{cut}$ if $s$ time-slices unfold from time $t$, when ${s=0,\ldots,N\!-\!2}$. For more detail, see the discussion in~\cite{Collazo.Smith.2018a}.
	
	Let $\Lambda(\mathcal{W}_t^{cut})=\cup_{w \in \mathcal{W}_t^{cut}}\Lambda(w,t)$, where $\Lambda(w,t)$ denotes the set of all walks ${\lambda=(w_{0},\ldots,w{(t)}) \subset \mathbb{C}}$, such that each walk~$\lambda$ passes through cycle temporal edges exactly $h(t)$ times. Also let $\Lambda_{s}(w)$ be the set of all walks that unfolds from~$w$ over $s$~time-slices in $\mathbb{C}$ and $\xi(\lambda)$ be the sequence of events associated with a walk~$\lambda$. 
	Finally, let 
	$\Xi(\mathcal{W}_t^{cut(s)})=\cup_{w \in \mathcal{W}_t^{cut(s)}} \{\xi(\lambda);\lambda \in \Lambda_{s}(w)\}$ 
	denote the set of sequences of events~$\xi(\lambda)$ that can unfold from $\mathcal{W}_t^{cut}$ over $s$~time-slices. When $s$ is equal to zero, $\Xi(\mathcal{W}_t^{cut(s)})$ denotes the set of developments~$\xi(\lambda)$ from $\mathcal{W}_t^{cut}$ during the current time-slice~$t$. 
	Analogously to a cut, we can now define three useful random variables that assume values over time~$t$ and time step~$s$, $t=0,1,\ldots$ and $s={N\!-\!1},N,\ldots$, as follows:
	
	\begin{enumerate}[nosep]
		\item $X(\mathcal{W}_t^{cut(s)})$ is the downstream random variable of  $\mathcal{W}_t^{cut}$ in $\mathbb{C}$ whose state space is the set $\mathbb{X}(\mathcal{W}_t^{cut(s)})=\{1,2,\ldots,|\Xi(\mathcal{W}_t^{cut(s)})|\}$, such that there exists a bijection ${\varpi_{X(\mathcal{W}_t^{cut(s)})}:\mathbb{X}(\mathcal{W}_t^{cut(s)}) \to \Xi(\mathcal{W}_t^{cut(s)})}$. Its probability mass function $\pi_{X}(x)$ is defined by
		\vspace{-5pt}
		\begin{equation}
		\label{eq:X_fine_cut_mass_function}
		\pi_{X}(x) \propto 
		\sum_{\bar{\lambda} \in \Lambda_x(\mathcal{W}_t^{cut(s)})}
		\prod_{\substack{w \in \bar{\lambda} \\ w \neq l(\bar{\lambda})}} \pi(w'(w)|w),
		\quad x \in \mathbb{X}(\mathcal{W}_t^{cut(s)}),
		\vspace{-5pt}
		\end{equation}
		where ${\Lambda_x(\mathcal{W}_t^{cut(s)}) \!=\!
			\{\bar{\lambda}\!=\!(\lambda_*,\lambda) \subseteq \mathbb{C};\lambda_* \in \Lambda(\mathcal{W}_t^{cut}) \text{ and } \xi(\lambda)\!=\!\varpi_{X}(x)\}}$
		is the set of all walks $\bar{\lambda}$ in $\mathbb{C}$ that have two disjoint sub-walks ${\lambda_* \!\!\in\! \Lambda(\mathcal{W}_t^{cut})}$ and $\lambda$, such that $\lambda$ unfolds from $\lambda_*$ over~$s$ time-slices and $\xi(\lambda)\!=\!\varpi_{X}(x)$.
		
		\item ${Q}(\mathcal{W}_t^{cut(s)})$ is the separator random variable whose state space is given by the set ${\mathbb{Q}(\mathcal{W}_t^{cut(s)})=\{1,2,\ldots,|\mathcal{W}_t^{cut}|\}}$, such that there is a bijection $\varpi_{Q(\mathcal{W}_t^{cut})}:\mathbb{Q}(\mathcal{W}_t^{cut}) \to \mathcal{W}_t^{cut}$. Its probability mass function $\pi_{Q}(q)$ is proportional to the sum of all the monomials in primitives associated with $\Lambda(w,t), w \in \mathcal{W}_t^{cut}$. Symbolically then,
		\vspace{-5pt}
		\begin{equation}
		\label{eq:Q_fine_cut_mass_function}
		\pi_{Q}(q) \propto 
		\sum_{\lambda \in \Lambda(\varpi_Q(q),t)}
		\prod_{\substack{{w} \in \lambda \\ {w} \neq l(\lambda)}} 
		\pi(w'({w})|{w}), \quad 
		q \in \mathbb{Q}(\mathcal{W}_t^{cut}).
		\vspace{-5pt}
		\end{equation}
		
		\item $Z(\mathcal{W}_t^{cut(s)})$ is the upstream random variable of  $\mathcal{W}_t^{cut}$ in $\mathbb{C}$ whose state space consists of 
		the set ${\mathbb{Z}(\mathcal{W}_t^{cut})=\{1,2,\ldots,|\Lambda(\mathcal{W}_t^{cut})|\}}$, such that there is a bijection $\varpi_{Z(\mathcal{W}_t^{cut})}:\mathbb{Z}(\mathcal{W}_t^{cut}) \to \Lambda(\mathcal{W}_t^{cut})$. Its probability mass function is proportional to each monomial in the primitives corresponding to a walk that constitutes its state spaces. Explicitly,
		\begin{equation}
		\label{eq:Z_fine_cut_mass_function}
		\pi_{Z}(z) 
		\propto 
		\prod_{\substack{{w} \in \lambda \\ {w} \neq l(\lambda)}}
		\pi(w'({w})|{w}), \quad 
		z \in \mathbb{Z}(\Lambda(\mathcal{W}_t^{cut})).
		\end{equation}
		where $\lambda=\varpi_{Z(\mathcal{W}_t^{cut})}(z)$.
	\end{enumerate}
	Observe that again `$=$' can substitute `$\propto$' in the equations above if the $N\text{T-DCEG}$ does not have a sink position.
	
	To define these three variables when $t\!=\!0,1,\ldots$ and $s\!=\!0,\ldots,N\!-\!2$, take a partition~${\beth_t^s=\{\beth_{t,1}^s,\ldots,\beth_{t,K}^s\}}$ of $\mathcal{W}_t^{cut}$. Recall from~\cite{Collazo.Smith.2018a} that the position structure of $\mathbb{C}_{g(t)+s}$ at time~$g(t)$ results from an application of the vertex contraction operator and so naturally yields a partition $\beth_t^{s}\!=\!\{\beth_{t,1}^{s},\ldots,\beth_{t,K}^{s}\}$ over~$\mathcal{W}_t^{cut}$ according to the merged vertices. Now set 
	${\mathcal{W}_t^{cut(s)}=\beth_t^s}$. For all $t=-1,0,\ldots$, the definitions of random variables ${X}(\mathcal{W}_t^{cut(s)})$ and ${Z}(\mathcal{W}_t^{cut(s)})$, ${s=0,\ldots,N\!-\!2}$, are identical to the variable $X$ and $Z$ associated with $\mathcal{W}_t^{cut(s)}$, ${s=N\!-\!1,N,\ldots}$. It then follows that equations~\ref{eq:X_fine_cut_mass_function} and~\ref{eq:Z_fine_cut_mass_function} remain valid when $s=0,\ldots,N\!-\!2$. 
	
	Of course, the variable $Q(\mathcal{W}_t^{cut(s)})$ has to be redefined appropriately since its state space is now given by ${\mathbb{Q}(\mathcal{W}_t^{cut(s)})=\{1,2,\ldots,|\beth_t^{s}|\}}$, , such that there is a bijection $\varpi_{Q(\mathcal{W}_t^{cut(s)})}:\mathbb{Q}(\mathcal{W}_t^{cut(s)}) \to \beth_t^{s}$. Its probability mass function~$\pi_{Q}(q)$ then corresponds to the weighted sum of all probability masses defined by equation~\ref{eq:Q_fine_cut_mass_function} associated of a position in $\beth_{t,i}^{s}$. Symbolically we therefore have that
	\begin{equation}
	\label{eq:Q_fine_cut_s_mass_function}
	\pi_{Q}(q) \propto
	\sum_{w \in \varpi_Q(q)}
	\sum_{\lambda \in \Lambda(w,t)}
	\prod_{\substack{\bar{w} \in \lambda \\ \bar{w} \neq l(\lambda)}} 
	\pi(w'(\bar{w})|\bar{w}),
	\quad q \in \mathbb{Q}(\mathcal{W}_t^{cut(s)}).
	\vspace{-5pt}
	\end{equation}
	
	These random variables enable us to read a large collection of conditional independence statements between vectors of functions of primitive random variables embedded into the $N\text{T-DCEG}$ topology. This is because a fine cut is based on positions that gather situations in a staged tree all of whose future developments are equivalent. Theorem~\ref{the:fine_cut_independence} tells us that a unit's future unfoldings  are independent from the whole set of its past events given that the available information on it constitutes a fine cut. It also guarantees that, given a fine cut at time~$t$ and time-horizon~$s$, a function of upstream variables that makes all the corresponding downstream variables conditionally independent from upstream variables must constitute a fine cut.      
	
	\begin{myTheorem}
		\label{the:fine_cut_independence}
		Take a fine cut $\mathcal{W}_t^{cut}$, $t=-1,0,1,\ldots$, in an $N$T-DCEG $\mathbb{C}$. For every ${s=0,1,\ldots}$, we have that
		\begin{equation}
		\boldsymbol{X}(\mathcal{W}_t^{cut(s)})
		\independent
		\boldsymbol{Z}(\mathcal{W}_t^{cut(s)})
		|
		\boldsymbol{Q}(\mathcal{W}_t^{cut(s)}).
		\label{eq:fine_cut_independence}
		\end{equation}
		Additionally, if a function $f({Z}(\mathcal{W}_t^{cut(s)}))$ satisfies
		\vspace{-5pt}
		\begin{equation}
		{X}(\mathcal{W}_t^{cut(s)})
		\independent
		{Z}(\mathcal{W}_t^{cut(s)})
		|
		f({Z}(\mathcal{W}_t^{cut(s)})),
		\label{eq:fine_cut_independence_function}
		\vspace{-5pt}
		\end{equation}
		then ${Q}(\mathcal{W}_t^{cut(s)})$ is a function of $f({Z}(\mathcal{W}_t^{cut(s)}))$  with probability one. These results also hold when a fine cut $\mathcal{W}_T^{cut(s)}$ is defined in a CEG $\mathbb{C}_{t+s} \in \mathcal{F}(\mathbb{C})$, ${t=T,T+1,\ldots}$.
	\end{myTheorem}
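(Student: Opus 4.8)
The plan is to follow the same line of argument as in the proof of Theorem~\ref{the:cut_independence}, replacing the one-step development of a cut by the $s$-step development of a fine cut, and to treat separately the two regimes $s\ge N-1$ and $s\le N-2$. When $s\ge N-1$ every object in \eqref{eq:X_fine_cut_mass_function}--\eqref{eq:Z_fine_cut_mass_function} is read directly off $\mathbb{C}$ because $\mathcal{W}_t^{cut(s)}=\mathcal{W}_t^{cut}$. When $s\le N-2$ one works inside the finite CEG $\mathbb{C}_{g(t)+s}\in\mathcal{F}(\mathbb{C})$ and uses the partition $\beth_t^s$ of $\mathcal{W}_t^{cut}$ induced by the vertex contraction that yields the position structure of $\mathbb{C}_{g(t)+s}$ at time $g(t)$, as recorded in~\cite{Collazo.Smith.2018a}; here $\mathcal{W}_t^{cut(s)}=\beth_t^s$ and the mass functions are those of \eqref{eq:X_fine_cut_mass_function}, \eqref{eq:Q_fine_cut_s_mass_function} and \eqref{eq:Z_fine_cut_mass_function}. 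The final assertion, that the result holds for a fine cut living in $\mathbb{C}_{t+s}$, then reduces to the finite-CEG version, which follows because by \cite[Theorem~5]{Collazo.Smith.2018a} each time-slice $t\ge N-1$ of any $\mathbb{C}_{t+s}$ has the same stage (hence position) structure as $\mathbb{D}_H\subset\mathbb{C}$, so the whole fine-cut apparatus transfers verbatim.

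For the conditional independence~\eqref{eq:fine_cut_independence}, the key observation is that $\boldsymbol{Z}(\mathcal{W}_t^{cut(s)})$ records the entire walk $\lambda_*$ from $w_0$ up to the fine cut, and by Definition~\ref{def:Fine_cut} such a walk passes through exactly one position of $\mathcal{W}_t^{cut}$; hence $\boldsymbol{Q}(\mathcal{W}_t^{cut(s)})$, being the label of that position (resp.\ of the block of $\beth_t^s$ containing it), is a deterministic function of $\boldsymbol{Z}(\mathcal{W}_t^{cut(s)})$. Thus \eqref{eq:fine_cut_independence} is equivalent to the statement that whenever two walks $\lambda_*,\lambda_*'\in\Lambda(\mathcal{W}_t^{cut})$ terminate at the same position $w$ (resp.\ block), the conditional law of the $s$-step development is the same. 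This is precisely what a position guarantees: the global and local conditions provide a path-, colour- and local-time-preserving bijection between $\Lambda(\mathcal{ST}(s_a))$ and $\Lambda(\mathcal{ST}(s_b))$ for any two situations $s_a,s_b$ merged into $w$, and truncating this bijection to $s$ time-slices (using time-homogeneity when $s\ge N-1$) carries each monomial in primitives to an equal monomial. Substituting the explicit mass functions \eqref{eq:X_fine_cut_mass_function}--\eqref{eq:Z_fine_cut_mass_function} then gives $\pi_{\boldsymbol{X}\boldsymbol{Z}}(x,z)=\pi_{\boldsymbol{X}\mid\boldsymbol{Q}}(x\mid q)\,\pi_{\boldsymbol{Z}}(z)$ with $q$ the label of the terminal position of $z$, and any normalising constant arising from a sink $w_\infty$ cancels on conditioning on $\boldsymbol{Q}$, which also justifies replacing `$\propto$' by `$=$'.

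For the converse, suppose $f(\boldsymbol{Z}(\mathcal{W}_t^{cut(s)}))$ satisfies \eqref{eq:fine_cut_independence_function} but $\boldsymbol{Q}(\mathcal{W}_t^{cut(s)})$ is not a function of it with probability one. Then with positive probability there are walks $z_1,z_2$ with $f(z_1)=f(z_2)$ whose terminal positions lie in distinct blocks $w_1\ne w_2$ of $\mathcal{W}_t^{cut(s)}$. By construction $\beth_t^s$ is exactly the coarsening of $\mathcal{W}_t^{cut}$ that identifies positions with indistinguishable $s$-step unfoldings (and for $s\ge N-1$ distinct positions remain distinguishable over their whole future), so $w_1$ and $w_2$ have non-equivalent $s$-step developments and hence there is a development $x$ with $\pi(\boldsymbol{X}=x\mid\boldsymbol{Z}=z_1)\ne\pi(\boldsymbol{X}=x\mid\boldsymbol{Z}=z_2)$. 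But \eqref{eq:fine_cut_independence_function} together with $f(z_1)=f(z_2)$ forces $\pi(\boldsymbol{X}=x\mid\boldsymbol{Z}=z_1)=\pi(\boldsymbol{X}=x\mid f(\boldsymbol{Z})=f(z_1))=\pi(\boldsymbol{X}=x\mid\boldsymbol{Z}=z_2)$, a contradiction; so $\boldsymbol{Q}(\mathcal{W}_t^{cut(s)})$ is a function of $f(\boldsymbol{Z}(\mathcal{W}_t^{cut(s)}))$ with probability one, the qualifier absorbing walks of probability zero along which two positions cannot be separated.

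The routine-but-delicate part will be the bookkeeping for $s\le N-2$: one must check that the partition $\beth_t^s$ read from $\mathbb{C}_{g(t)+s}$ is indeed the one entering the definitions of $\boldsymbol{X}$, $\boldsymbol{Q}$, $\boldsymbol{Z}$ above, and that $Q(\mathcal{W}_t^{cut(s)})$ collapses correctly under vertex contraction. The genuinely subtle point, as for Theorem~\ref{the:cut_independence}, is the implication `distinct blocks give distinct downstream conditional distributions' used in the converse: structurally distinct positions always yield distinct \emph{sets} of monomials, but to rule out an accidental numerical coincidence one needs either the faithfulness of the position/vertex-contraction construction or, if the characterisation is to hold for every positive parametrisation rather than generically, a general-position assumption on the primitives; this step, together with the edge cases $t=-1$ with $\mathcal{T}_{-1}\ne\emptyset$ versus $\mathcal{T}_{-1}=\emptyset$ and the possibility of probability-zero paths through $\mathbb{D}_I$ at time $N-1$, is handled exactly as in Theorem~\ref{the:cut_independence}.
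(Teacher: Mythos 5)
Your proposal is correct and follows essentially the same route as the paper's proof: the forward direction by observing that the terminal position (or block of $\beth_t^s$) determines the law of the $s$-step development so the upstream walk adds nothing, the converse by exhibiting two positive-probability walks with $f(z_1)=f(z_2)$ landing in distinct blocks and deriving a contradiction, and the CEG extension via Theorem~5 of the companion paper. Your additional remarks on the sink normalisation and on ruling out accidental numerical coincidences between distinct blocks are elaborations of steps the paper's proof asserts without comment, not a different argument.
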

	\begin{proof}
		See \ref{app:fine_cut_independence}.
	\end{proof}
	
	Thus, the units' behaviours may present important differences in the medium and long time (${s \!\geq\! N\!-\!1}$) but may be undistinguishable in the short term (${s\!\leq\! N\!-\!2}$). For analogous reasons to those discussed for a cut, a fine cut 
	$\mathcal{W}_t^{cut(s)}$ entails the same set of conditional probability statements  when $t=N-1,N,\ldots$ for a given time-horizon~$s$ and when $s=N-1,N,\ldots$ for a given time-slice~$t$. Thus, for any 2T-DCEG $\mathbb{C}$ all these constructions associated with a fine cut only require the three CEGs $\mathbb{C}_{i},i=0,\ldots,2$. 
	
	Recall that Theorem~\ref{the:cut_independence} tells us that a cut gathers all the necessary information to predict the immediately development of a unit in a process. Theorem~\ref{the:fine_cut_independence} guarantees that the future events are independent from past events given the actual state of a process. Therefore, these results enable us to use cuts and fine cuts to deduce some conditional independence statements given some observed effects in a way that extends the BN and DBN framework using the d-separation theorem. This happens because a Stratified DCEG can enrich the conditional independence hypotheses depicted in its corresponding DBN with context-specific deductions and the time horizon~$s$. These links are further discussed through the example below using the concept of fine cut.
	
	\begin{myExampleCont1}
		\emph{
			Return to the 2T-DCEG~$\mathbb{C}$ depicted in Figure~\ref{fig:2T-DCEG_Radicalisation_3var}. Assume that domain experts are interested in exploring the impacts of social connections on the risk of inmate's radicalisation and transfer at the current time-slice $t,t=1,2,\ldots$, given that the past information is completely available.
			To do this, we can use the fine cut
			\vspace{-5pt}
			$$\mathcal{W}_t^{cut}\!\!=\!\!\{w_{10},w_{11},w_{12},w_{19},w_{20},w_{21}\}.\vspace{-5pt}$$
			Since the experts' focus is on the current time-slice, we also need to set the time window~$s$ equal to $0$. As discussed previously, this time horizon then requires us to replace the fine cut~$\mathcal{W}_t^{cut}$ by  $\mathcal{W}_t^{cut(0)}=\beth_t^0$. Using the CEG $\mathbb{C}_2 \in \mathcal{F}(\mathbb{C})$ showed in Figure~\ref{fig:2T-DCEG_as_CEG}, we can see that $\beth_t^0=\{\beth_{t,i}^0,i=1,\ldots,4\}$, where $\beth_{t,1}^0=\{w_{10}\}$, $\beth_{t,2}^0=\{w_{11}\}$, $\beth_{t,2}^0=\{w_{12}\}$ and $\beth_{t,4}^0=\{w_{19},w_{20},w_{21}\}$.}
		
		\emph{
			The random variable $Q(\mathcal{W}_t^{cut(0)})$ then has four states $\{1,\ldots,4\}$, such that ${\varpi_{Q(\mathcal{W}_t^{cut(0)})}(i)=\beth_{t,i}^0}$, $i=1,\ldots,4$. These have the following interpretations: categories $1,2$ and $3$ characterise  non-adopting prisoners whose social networks were classified, respectively, as Sporadic, Frequent and Intense at time~$t-1$; and category~4 represents a prisoner adopting radicalisation at time~$t-1$. The variable $X(\mathcal{W}_t^{cut(0)})$ has $24$ states associated bijectively with the set of sequences of events
			\vspace{-5pt} $${\Xi(\mathcal{W}_t^{cut(0)})\!=\!\{({R},T),(N,R,T);N\!=\!s,f,i,{R}\!=\!r,v,a \text{ and } T\!=\!n,t\}}.\vspace{-5pt}$$
			So given a fine $\mathcal{W}_t^{cut(0)}$ we have $24$ possible outcomes at the end of this time-slice.}
		
		\begin{figure}[ht] 
			\begin{center}
				\includegraphics[scale=0.69,angle=-90,origin=c,trim=50 17 30 -130]{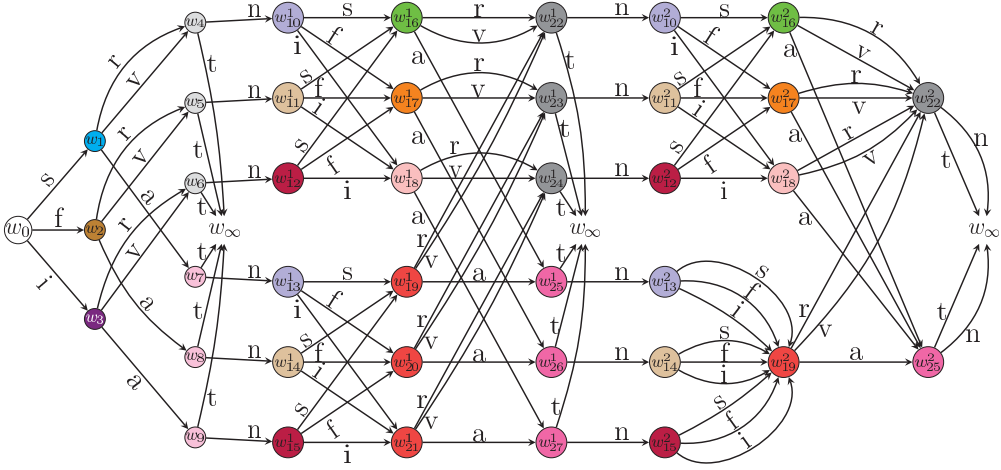}
			\end{center}
			\vspace{-90pt}
			\caption{The CEG $\mathbb{C}_2$ associated with the 2T-DCEG depicted in Figure \ref{fig:2T-DCEG_Radicalisation_3var} \label{fig:2T-DCEG_as_CEG}}
			\vspace{-5pt}
		\end{figure}
		
		\emph{
			Additionally this fine cut $\mathcal{W}_t^{cut(0)}$ tells us that the social network of an adopting prisoner at time~$t$ affects neither his radicalisation process nor his transfer probability. However this is not true if the inmate is not radicalised. Note that this kind of context-specific d-separation statement cannot be directly deducted from a BN in Figure~\ref{fig:DBN_Radicalisation} or its corresponding undirected moralised graph.
			For a larger time length, $s=1,2,\ldots$, we can see directly from the 2T-DCEG (Figure~\ref{fig:2T-DCEG_Radicalisation_3var}) that the future developments of a prisoner depend on the random vector $\boldsymbol{Q}(\mathcal{W}_t^{cut(s)})$ whose set of states are given by $\mathcal{W}_t^{cut}$. So now the current status of an adopting prisoner's network has an impact in his unfolding events.}
		
		\emph{
			Note that analogous conclusions could have been obtained if we had used the fine cut  $\mathcal{W}_t^{cut}\!\!=\!\!\{w_{16},\ldots,w_{21}\}$. However, in this case the interpretation of the variables would be based on the actual (time~$t$) social network of an inmate instead of his previous social classification at time~$t-1$. Also observe that the distribution of the variable $Q(\mathcal{W}_t^{cut(0)})$ would not depend on any information at time~${t}$ about the radicalisation status of an inmate or his chance of being transferred. This would enable us to easily update our judgements at time~$t$ as new information about the social contacts of an inmate is collected.}
	\end{myExampleCont1}
	
\section{Local independence and Granger noncausality}
\label{sec:local_global_independence}

Schweder \cite{Schweder.1970} first introduced the concept of local independence for Markov processes. Subsequently Aalen \cite{Aalen.1987} generalised this and applied it to processes with a Doob-Meyer decomposition. Didelez \cite{Didelez.2008} then further extended the concept so that it applied to general multivariate processes. The notion of local independence is useful because in a model that fully represents a process a local independence statement can be translated into Granger noncausality (\cite{Granger.1969}); see e.g. \cite{Eichler.2007,Didelez.2008}. Granger noncausality has recently also been discussed for mediation and intervention \cite{Eichler.Didelez.2010,Aalen.2012}. Here we develop the idea of local independence for the discrete time processes expressed within a DCEG. For technical consistency, if there is a terminating event in a DCEG, the following concepts of conditional independences are valid for a unit that experiences a terminating event at time $T$ as long as ${t=0,\ldots,T\!-\!1}$.  

Consider two random variables $X$ and $Y$ that take value over each time-slice. By saying that $X$ is locally independent from $Y$ we mean that the past values of $Y$ do not provide any additional information to predict the current value of $X$ given all set of past events up to the current time. Note that in a DCEG model these variables do not need to begin happening at the initial time-slice $t=0$; they can start to happen later. Also observe that in some DCEGs we may be interested only in time-slices from a certain time $T$ on. This is often the case for an $N\text{T-DCEG}$ model where the experts tend to focus on its cyclic subgraph $\mathbb{D}_H$. 

To handle these cases, building on previous work by \cite{Eichler.2007,Didelez.2008,Eichler.Didelez.2010}
we introduce the concept of $T$-local independence below. This enables us to analyse the impact of past events on the current target process from a time-slice $T$ on. This idea directly generalises to random vectors $\boldsymbol{X}\!$ and $\boldsymbol{Y}\!$. Let $\mathcal{E}^{(t)}$ denote the collection of all sequences of events that happened up to the end of time-slice $t$ and let $\mathcal{E}^{(t-1)}_{(-\boldsymbol{X})} \!\!\subset\! \mathcal{E}^{(t)}$ denote the history of past events that excludes information with respect a random vector~$\boldsymbol{X}$. 

\begin{myDef}
\label{def:Local_independence}
Take two random vectors $\boldsymbol{X}$ and $\boldsymbol{Y}$ measurable with respect every time-slice $t, t \geq T$, of a DCEG. A vector $\boldsymbol{X}$ is said to be $\boldsymbol{T}$\textbf{-locally independent} from $\boldsymbol{Y}$ if all probability distributions $p_{\boldsymbol{X}(t)}(\boldsymbol{x}(t)|\mathcal{E}^{(t-1)})$ are measurable with respect to $\mathcal{E}^{(t-1)}_{(-\boldsymbol{Y})}$ for all $t=T,T+1,\ldots$. Denote this by $\boldsymbol{X} {\underrightarrow{ \independent}_{T}} \boldsymbol{Y}$. If the local independence condition holds for all time-slice $t, {t\geq 0}$, then $\boldsymbol{X}$ is said to be locally independent from $\boldsymbol{Y}$. Henceforth we will denote local independence by $\boldsymbol{X} \underrightarrow{ \independent} \boldsymbol{Y}$.
\end{myDef}

Assuming that the underlying event tree of a DCEG $\mathbb{C}$ completely describes the natural behaviour of a process, a random vector $\boldsymbol{Y}$ is (strongly) Granger noncausal for $\boldsymbol{X}$ with respect to $\mathbb{C}$ if $\boldsymbol{X} \underrightarrow{ \independent} \boldsymbol{Y}$. Otherwise, we say that $\boldsymbol{Y}$ is Granger causal or a \emph{prima facie} cause for $\boldsymbol{X}$. Analogously, we say that a random vector $\boldsymbol{Y}$ is $T$-Granger noncausal for $\boldsymbol{X}$ with respect to $\mathbb{C}$ if $\boldsymbol{X} \underrightarrow{ \independent}_T \boldsymbol{Y}$. For the validity of the Granger causal interpretation, events not depicted in the event tree cannot span Granger causal structures between random variables measurable with respect the corresponding DCEG.  

The $T$-local independence relation is not necessarily symmetric and therefore neither is Granger noncausality \cite{Didelez.2008}. For example, in the 2T-DCEG depicted in Figure \ref{fig:2T-DCEG_Radicalisation_3var} the network variable $N$ is locally independent from the radicalisation variable $R$ but the inverse relation does not hold. So under the assumption that this model is a fair representation of the radicalisation process in a prison we can say that $R$ is Granger noncausal for $N$ whilst $N$ is a \emph{prima facie} cause for $R$.

In discrete time we are often also interested in exploring intra-time conditional independences that characterize each time-slice given the whole set of past events. This differentiates the DCEG from a continuous time graphical model \cite{Gottard.2007,Didelez.2008} where two different counting processes cannot represent the same event. In those frameworks, a prisoner is assumed not to be radicalised and to change his social network at the same time.

In this respect the DCEG models come closest to the path diagrams used to visualise the dynamic of multivariate weakly stationary multivariate time series \cite{Eichler.2007}. However path diagrams have a different graphical semantic from DCEG models because there vertices represent processes, directed edges correspond to local dependences and dashed edges depict intra-time dependences. This makes them unable to represent any graphically context-specific hypotheses. In these models all time-slices also have the same conditional independence structure. For the purpose of this paper it is therefore useful to introduce the concept of $T$-contemporaneous independence. Here we follow some previous authors~\cite{Granger.1980,Eichler.2007,Eichler.Didelez.2010}.

\begin{myDef}
\label{def:Contemporaneous_Independence}
Take two random vectors $\boldsymbol{X}$ and $\boldsymbol{Y}$ measurable with respect to every time-slice $t, t \geq T$, of a DCEG. These variables are said to be $\boldsymbol{T}\text{\textbf{-contemporaneously independent}}$ if for every $t, t=T, T+1,\ldots$, their joint probability distribution is such that  
\begin{equation}
\label{eq:simultaneous_independence}
p_{\boldsymbol{X}(t),\boldsymbol{Y}(t)}
	(\boldsymbol{x}(t),\boldsymbol{y}(t)|\mathcal{E}^{(t-1)})=
							p_{\boldsymbol{X}(t)}(\boldsymbol{x}(t)|\mathcal{E}^{(t-1)})
							p_{\boldsymbol{Y}(t)}(\boldsymbol{y}(t)|\mathcal{E}^{(t-1)}).
\end{equation}
This will be denoted by $\boldsymbol{X} \stackrel{\independent_{\!T}}{\sim} \boldsymbol{Y}|\mathcal{E}^{(t-1)}$. If this property holds for all time-slices, the variables are simply said to be contemporaneously independent and the subscript $T$ can be dropped from the notation.  
\end{myDef}

Finally the stochastic independence given in Definition~\ref{def:Stochastic_Independent} below establishes the condition for two random vectors to be globally independent given the past events. Theorem~\ref{pro:stochastic_local_independence} guarantees that this kind of stochastic independence only happens in the presence of contemporaneous and local independences. This result provides us with a framework that enables us to determine whether or not two variables are stochastically independent without verifying equation~\ref{eq:stochastic_independence} using various algebraic calculations. For example, from Figure~\ref{fig:2T-DCEG_Radicalisation_3var} we can read directly from this ${2\text{T-DCEG}}$ that the variables $N, R$ and $T$ are not stochastically independent since they are not contemporaneously independent. 

\begin{myDef}
\label{def:Stochastic_Independent}
Take two random vectors $\boldsymbol{X}$ and $\boldsymbol{Y}$ measurable with respect every time-slice $t, t \geq T$, of a DCEG. These variables are $\boldsymbol{T}$\textbf{-stochastically independent} if for every $t, t=T, T+1,\ldots$, their joint probability distribution is such that  
\begin{equation}
\label{eq:stochastic_independence}
p_{\boldsymbol{X}(t),\boldsymbol{Y}(t)}
(\boldsymbol{x}(t),\boldsymbol{y}(t)|\mathcal{E}^{(t-1)})=
		p_{\boldsymbol{X}(t)}(\boldsymbol{x}(t)|\mathcal{E}^{(t-1)}_{(-\boldsymbol{Y})})
		p_{\boldsymbol{Y}(t)}(\boldsymbol{y}(t)|\mathcal{E}^{(t-1)}_{(-\boldsymbol{X})}).
\end{equation}
This is denoted by $\boldsymbol{X} {\independent_{\!T}} \boldsymbol{Y}$. If at least one of these variables does not exist in any time-slice $t<T$, they are stochastically independent. We then simply denote this by writing $\boldsymbol{X} \independent \boldsymbol{Y}$.
\end{myDef}

\begin{myTheorem}
\label{pro:stochastic_local_independence}
Two random variables $\boldsymbol{X}$ and $\boldsymbol{Y}$ measurable with respect a DCEG are $T$-stochastically independent if and only if they are mutually $T\text{-locally}$ independent and $T$-contemporaneously independent.
\end{myTheorem}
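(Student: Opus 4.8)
The plan is to prove both implications by elementary manipulation of the one-step conditional mass functions, using that a DCEG is a discrete model: each history algebra $\mathcal{E}^{(t)}$ is generated by the atoms given by root-to-situation walks, so every conditional distribution appearing in Definitions~\ref{def:Local_independence}, \ref{def:Contemporaneous_Independence} and~\ref{def:Stochastic_Independent} is an ordinary elementary conditional probability on a positive-probability atom. First I would record the standard reformulation of $T$-local independence in this setting: since $\mathcal{E}^{(t-1)}_{(-\boldsymbol{Y})}$ is a coarsening of $\mathcal{E}^{(t-1)}$, requiring $p_{\boldsymbol{X}(t)}(\boldsymbol{x}(t)\mid\mathcal{E}^{(t-1)})$ to be $\mathcal{E}^{(t-1)}_{(-\boldsymbol{Y})}$-measurable is, by the tower property, equivalent to the pointwise identity $p_{\boldsymbol{X}(t)}(\boldsymbol{x}(t)\mid\mathcal{E}^{(t-1)})=p_{\boldsymbol{X}(t)}(\boldsymbol{x}(t)\mid\mathcal{E}^{(t-1)}_{(-\boldsymbol{Y})})$ holding for all $t\ge T$, all $\boldsymbol{x}(t)$, and all relevant atoms; and likewise for $\boldsymbol{Y}$ with $\mathcal{E}^{(t-1)}_{(-\boldsymbol{X})}$. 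Everything after this is algebra on these identities.

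For the ``if'' direction I would fix $t\ge T$ and a positive-probability atom of $\mathcal{E}^{(t-1)}$, start from the contemporaneous factorisation in equation~\ref{eq:simultaneous_independence}, and substitute the two local-independence identities (one for $\boldsymbol{X}$, one for $\boldsymbol{Y}$, using mutuality) into its two factors. The product becomes $p_{\boldsymbol{X}(t)}(\boldsymbol{x}(t)\mid\mathcal{E}^{(t-1)}_{(-\boldsymbol{Y})})\,p_{\boldsymbol{Y}(t)}(\boldsymbol{y}(t)\mid\mathcal{E}^{(t-1)}_{(-\boldsymbol{X})})$, which is exactly the right-hand side of equation~\ref{eq:stochastic_independence}; hence $T$-stochastic independence. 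This direction is immediate once the reformulation is in place.

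For the ``only if'' direction I would assume equation~\ref{eq:stochastic_independence} for all $t\ge T$ and marginalise. Summing both sides over all states $\boldsymbol{y}(t)$ of $\boldsymbol{Y}(t)$ collapses the left-hand side to $p_{\boldsymbol{X}(t)}(\boldsymbol{x}(t)\mid\mathcal{E}^{(t-1)})$ by the law of total probability on the atom, while on the right $\sum_{\boldsymbol{y}(t)}p_{\boldsymbol{Y}(t)}(\boldsymbol{y}(t)\mid\mathcal{E}^{(t-1)}_{(-\boldsymbol{X})})=1$ since that factor is a genuine mass function; this yields $p_{\boldsymbol{X}(t)}(\boldsymbol{x}(t)\mid\mathcal{E}^{(t-1)})=p_{\boldsymbol{X}(t)}(\boldsymbol{x}(t)\mid\mathcal{E}^{(t-1)}_{(-\boldsymbol{Y})})$, i.e.\ $\boldsymbol{X}$ is $T$-locally independent from $\boldsymbol{Y}$. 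Marginalising instead over $\boldsymbol{x}(t)$ gives the symmetric statement, so we obtain mutual $T$-local independence. Substituting these two identities back into equation~\ref{eq:stochastic_independence} rewrites its right-hand side as $p_{\boldsymbol{X}(t)}(\boldsymbol{x}(t)\mid\mathcal{E}^{(t-1)})\,p_{\boldsymbol{Y}(t)}(\boldsymbol{y}(t)\mid\mathcal{E}^{(t-1)})$, which is precisely equation~\ref{eq:simultaneous_independence}; hence also $T$-contemporaneous independence. The two implications give the equivalence.

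The main obstacle is not the algebra but the bookkeeping around null events: both the normalisation $\sum_{\boldsymbol{y}(t)}p_{\boldsymbol{Y}(t)}(\boldsymbol{y}(t)\mid\mathcal{E}^{(t-1)}_{(-\boldsymbol{X})})=1$ and the tower-property reformulation presuppose that the conditioning events have positive probability, which in an $N$T-DCEG with a terminating event fails for some walks (for instance certain paths of $\mathbb{D}_I$, and all walks beyond a unit's termination time). I would handle this exactly as flagged just before Definition~\ref{def:Local_independence}: for a unit terminating at time $T'$ restrict attention to $t=0,\dots,T'-1$, assert all pointwise identities only on positive-probability atoms, and note that the ``does not exist in any time-slice $t<T$'' clauses in Definitions~\ref{def:Contemporaneous_Independence} and~\ref{def:Stochastic_Independent} make the remaining degenerate cases vacuous. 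I would also spell out once, carefully, why marginalising an elementary conditional joint mass function over one of its arguments returns the corresponding conditional marginal, since that is the single computational fact the whole argument rests on.
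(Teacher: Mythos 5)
Your proposal is correct and follows essentially the same route as the paper's own proof: marginalising equation~\ref{eq:stochastic_independence} over $\boldsymbol{y}(t)$ (resp.\ $\boldsymbol{x}(t)$) to extract mutual $T$-local independence and then substituting back to get $T$-contemporaneous independence, with the converse obtained by chaining the two factorisations. Your additional bookkeeping on null atoms and the measurability-to-pointwise-identity reformulation is more careful than the paper's treatment but does not change the argument.
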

\begin{proof}
See \ref{app:local_stochast_simult_independence}.
\end{proof}

\section{Conclusions}
\label{sec:Conclusion}

An $N$T-DCEG provide us with a rich structure to incorporate information extracted from domain experts and a data set. In Section~\ref{sec:Composite_Model}  
we have proposed an algorithmic tool based on the $N$T-DCEG framework to support rational consensus building~\cite{Cooke.1991} between domain experts, modellers and decision makers. We have also shown that being able to graphically depict context-specific conditional independence statements the 2T-DCEG models contain all discrete $2\text{T-DBN}$ models as a special case. We have argued that this fact can be used to design efficient model selection algorithm that make good use of computational time and memory to search the $2$T-DCEG model space.

We have developed a methodology to construct non-trivial random variables from an $N$T-DCEG topology. Particularly for a process with highly asymmetric developments this facilitates the translation of an $N$T-DCEG model into a $N$T-DBN model and stimulates the exploration of further links between these two model classes. The systemic construct of random variables can also be used for inductive reasoning and abductive reasoning based, respectively, on cuts and fine cuts.      

It was demonstrated that the conditional independences in an ${N\text{T-DCEG}}$ can also be interpreted in terms of Granger noncausality. However this relationships needs to be developed further. For example, by identifying a fine cut $\mathcal{W}^{cut(s)}$ we are able to expose the conditional independences across different time steps~$s$. This then leads us to a direct link with Granger noncausality as it applies to different time horizons~\cite{Dufour.Renault.1998} and gives us a new graphical framework which appears to be able to distinguish between short-run and long-run causal mechanisms.

Here the Granger noncausality is defined with respect to the whole set of past events $\mathcal{E}^{(t)}$. However, we may want to focus our attention on a particular subset of events $\mathcal{E}^{(t)}_*\subset \mathcal{E}^{(t)}$. Similarly, the conditions determining the Granger noncausal relations with respect to a proper subset of events $\mathcal{E}_*^{(t)}$  in a DCEG still remain unexplored as are the types of assumptions about local independences that are needed in order to assess the causal effect arising from an intervention on the system. As discussed for path diagrams applied to time series \cite{Eichler.Didelez.2010}, these developments demand the combination of the Granger noncausality idea \cite{Granger.1969} and the Pearl's stronger causality concept~\cite{Pearl.2009} presented in \cite{Thwaites.etal.2010,Thwaites.2013}. An ${N\text{T-DCEG}}$ also appears to provide a useful framework for deriving contemporaneous causal relations \cite{Granger.1988, Lutkepohl.1990}.

\appendix
\section{Proof of Theorem \ref{the:DBN_SDCEG}}
\label{app:DBN_SCDEG}

\noindent
\emph{
	All conditional independence statements entailed by the ordered Markov property  in a DBN can be depicted by an SDCEG.
	\vspace{3pt}}

Take a DBN model defined by a set of random variables~$\boldsymbol{\mathcal{Z}}=\{\mathcal{Z}_1,\ldots,\mathcal{Z}_n\}$, a permutation~$I=(i_1,\ldots,i_n)$ and a DAG~$\mathbb{D}=(\bigcup_t V(t), \bigcup_t E(t) \cup E_\dagger(t)$) corresponding to~$\boldsymbol{Z}(I)$, or simply~$\boldsymbol{Z}$. Denote by~$\mathbb{Z}_{i_k}$ the set of events corresponding to the values of~$\mathcal{Z}_{i_k}$.  Let ${\mathbb{Z}^{(k)}(I)=\mathbb{Z}_{i_{1}}\times \mathbb{Z}_{i_{2}}\times \cdots \times \mathbb{Z}_{i_{k}}}$
be the product space of the first $k$ variables in $\boldsymbol{Z}(I)$. Denote by $(v_i,v_j,z)$ a labelled edge in an event tree, where $z$ is the label of a directed edge~$(v_i,v_j)$.

Now construct the finite event tree~$\mathcal{T}(\boldsymbol{Z})=(V(\boldsymbol{Z}),E(\boldsymbol{Z}))$ as follows:
	\begin{enumerate}[nosep,leftmargin=0.9cm]
		\item The situation set $V(\boldsymbol{Z})$ is formed by a root situation $s_{0}$ together with a set of nodes~$v(\boldsymbol{z}^{(k)})$, one for
		each $\boldsymbol{z}^{(k)}=(z_{i_{1}},z_{i_{2}},\ldots ,z_{i_{k}})$ in $\mathbb{Z}^{(k)}(I)$, $k=1,\ldots,n$, such that, for $k=2,\ldots,n$, none of $z_{i_1}$,\ldots, $z_{i_{k-1}}$ correspond to a terminating event. Note that each $v(\boldsymbol{z}^{(n)})$,$v(\boldsymbol{z}^{(n)}) \in V(\boldsymbol{Z})$, is a leaf and so is every $v(\boldsymbol{z}^{k})$, such that $z_{i_{k}}$ is the only label in~$\boldsymbol{z}^{k}$ associated with a terminating event.
		\item The edge set $E(\boldsymbol{Z})$ consists of the set of labelled edges $(s_0,v(\boldsymbol{z}^{(1)}),z_{i_1})$, where $\boldsymbol{z}^{(1)}=(z_{i_1})$ and ${z_{i_1} \in  \mathbb{Z}_{i_1}}$, together with a set of labelled edges $(v(\boldsymbol{z}^{(k)}),v(\boldsymbol{z}^{(k+1)}),z_{i_{k+1}})$, $k=1,\ldots,n\!-\!1$, where: $v(\boldsymbol{z}^{(k)})$ and $v(\boldsymbol{z}^{(k+1)})$ are in~$V(\boldsymbol{Z})$; $z_{i_{k+1}} \in \mathbb{Z}_{i_{k+1}}$; and ${\boldsymbol{z}^{(k+1)}=(\boldsymbol{z}^{(k)},z_{i_{k+1}})}$. 
	\end{enumerate}
Observe that in the event tree $\mathcal{T}(\boldsymbol{Z})$ all of its non-root situations~$v(\boldsymbol{z} ^{(k)})$ are at the same distance~$k$. The root situation~$s_0$ corresponds to the variable~$\mathcal{Z}_1$ and each situation~$v(\boldsymbol{z} ^{(k)})$ in $V(\boldsymbol{Z})$ is associated with the variable~$\mathcal{Z}_{k+1}$. Let $\mathcal{T}_\infty(\boldsymbol{Z})$ be the infinite tree spanned by  TOG($\emptyset,\mathcal{T}(\boldsymbol{Z})$). 

Note that given the ordered Markov property (OMP) any vertex $v_{i_k}(t)$ in $\mathbb{D}$, which corresponds to variable~$\mathcal{Z}_{i_k}$ at time-slice~$t$, can be well-defined in the whole vertex set $\bigcup_t V(t)$ by a new index~$l$, $l=i_k + n*t-1$; i.e., $v_l \equiv v_{i_k}(t)$. Define the set~$U_1=\{s_0\}$. For every $L, L=1,2,\ldots$, now construct a partition of the situation set of~$\mathcal{T}_\infty(\boldsymbol{Z})$ using the DAG~$\mathbb{D}$ as follows:
\begin{enumerate}[nosep]	
\item Define the sequence $I_L=(l_1,\ldots,l_{n_L}), l_1<\ldots<l_{n_L}$, constituted by the indices~$l_i$ of all vertices in~$pa(\mathcal{Z}_L)$ with respect to~$\mathbb{D}$. 

\item Take the set of vectors $R_L=\{\boldsymbol{\rho}; \boldsymbol{\rho} \in \mathbb{Z}_{l_{1}} \times \cdots \times \mathbb{Z}_{l_{n_L}}\}$.

\item For every $\boldsymbol{\rho}=(\rho_1,\ldots,\rho_{l_{n_L}})$ in $R_L$, define the situation set $U_\rho^L$  constituted by all situations~$s_j$ at distance~$L$ from~$s_0$ in~$\mathcal{T}_\infty(\boldsymbol{Z})$, such that along the root-to-$s_j$ path the unfolding event of each situation~$s_{l_i}$, $l_i \in I_L$,  is~$\rho_{l_i}$. Let $U_L=\{U_\rho^L\}_{\rho \in R_L} $. If $R_L=\emptyset$, then $U_L$ is the set of all situations at distance~$L$ from~$s_0$ in~$\mathcal{T}_\infty(\boldsymbol{Z})$. 

\item If there is a vertex $v_l, l<L$, in~$\mathbb{D}$ such that $v_l$ and $v_L$ at different time-slices~$t_l$ and~$t_L$ represent the same variable $\mathcal{Z}_{i_k}$ whose conditional probability table is the same for~$t_l$ and~$t_L$, then for every~$\rho$ in $R_L$ do $U_\rho^l \gets U_\rho^l \cup U_\rho^L$ and $U_L \gets \emptyset$.  
\end{enumerate}

Define the stage structure associated with~$\mathcal{T}_\infty(\boldsymbol{Z})$ as $U=\bigcup_L U_L$ and obtain the corresponding $\boldsymbol{\mathcal{Z}}$-DCEG~$\mathbb{C}$. Observe that this DCEG is stratified since it is yielded by a TOG and each stage only merges situations corresponding to the same variable. Remember that by construction of~$\mathcal{T}_\infty(\boldsymbol{Z})$ all situations at the same distance from~$s_0$ are associated with the same variable. Also note that $\mathcal{T}_\infty(\boldsymbol{Z})$ is built using the ordering set by the OMP and each stage associated with a variable~$Z_{i_k}$ at time-slice~$t$ is defined according to the conditional independence statements given by the parent set of~~$Z_{i_k}$ at time-slice~$t$. Thus, by construction of~$\mathcal{T}_\infty(\boldsymbol{Z})$ and~$U$ the DCEG~$\mathbb{C}$ also represents the collection of all conditional independences represented by the OMP based on~$\mathbb{D}$.

\section{Proof of Corollary~\ref{the:2TDBN_2TSDCEG}}
\label{app:2TDBN_2TSCDEG}

\noindent
\emph{
	Every conditional independence statements defined by the ordered Markov property  in a $\text{$N\!$T-DBN}$ can be expressed in a \text{$N\!$T-SDCEG}.
	\vspace{3pt}}

Take a $N$T-DBN model defined by a set of random variables~$\boldsymbol{\mathcal{Z}}$, a permutation~$I$ and a DAG~$\mathbb{D}$ corresponding to~$\boldsymbol{Z}(I)$, or simply~$\boldsymbol{Z}$. Elicit the infinite event tree~$\mathcal{T}_\infty(\boldsymbol{Z})$ and the corresponding stage structure~$U$ as described in~\ref{app:DBN_SCDEG}. Thus, $\mathcal{T}_\infty(\boldsymbol{Z})$ is a TOG($\emptyset,\mathcal{T}(\boldsymbol{Z})$) and $U$ represents all conditional independence statements described by the ordered Markov property associated with~$\mathbb{D}$. Note that U also satisfies condition~$2$ in Defintion~\ref{def:SDCEG}. Since any $N$T-DBN model assumes a Markov condition of order~${(N\!-\!1)}$, the staged tree~$\mathcal{ST}_\infty(\boldsymbol{Z},U)$ yielded by~$\mathcal{T}_\infty(\boldsymbol{Z})$ and~$U$ is time-homogeneous after time~${(N\!-\!1)}$. Therefore, the \text{$N\!$T$\boldsymbol{\mathcal{Z}}$-SDCEG} supported by~$\mathcal{ST}_\infty(\boldsymbol{Z},U)$ and defined in terms of ${(N\!-\!1)\text{-positions}}$ satisfies Corollary~\ref{the:2TDBN_2TSDCEG}.  

\section{Proof of Theorem \ref{the:cut_independence}}
\label{app:cut_independence}

\noindent
\emph{
	Take a cut $\mathcal{U}_t^{cut}$, $t=-1,0,1,\ldots$, in an $N$T-DCEG $\mathbb{C}$. Then
	\vspace{-9pt}
	\begin{equation*}
	{X}(\mathcal{U}_t^{cut})
	\independent
	{Z}(\mathcal{U}_t^{cut})
	|
	{Q}(\mathcal{U}_t^{cut}).
	\vspace{-9pt}
	\end{equation*}
	Additionally, if a function $f({Z}(\mathcal{U}_t^{cut}))$ satisfies
	\vspace{-9pt}
	\begin{equation*}
	{X}(\mathcal{U}_t^{cut})
	\independent
	{Z}(\mathcal{U}_t^{cut})
	|
	f({Z}(\mathcal{U}_t^{cut})),
	\vspace{-9pt}
	\end{equation*}
	then ${Q}(\mathcal{U}_t^{cut})$ is a function of $f({Z}(\mathcal{U}_t^{cut}))$  with probability one. These results also hold when a cut $\mathcal{U}_T^{cut}$ is defined in a CEG $\mathbb{C}_t \in \mathcal{F}(\mathbb{C})$, $t=T,T+1,\ldots$.
	\vspace{3pt}}

By definition, if the value of $Q(\mathcal{U}_t^{cut})$ is observed, for example $q$, then any random variable based on a stage $\zeta_{Q(\mathcal{U}_t^{cut})}(q)$ is completely defined. So none of the $w_0$-to-$w(t)$ walks, $w(t) \in \zeta_{Q(\mathcal{U}_t^{cut})}(q)$,  can bring any additional information on the realization of $X(\mathcal{U}_t^{cut})$. 
Thus, $X(\mathcal{U}_t^{cut}) \independent Q(\mathcal{U}_t^{cut})|Z(\mathcal{U}_t^{cut})$.

Assume that $Q(\mathcal{U}_t^{cut})$ is not a function of $f(Z(\mathcal{U}_t^{cut}))$ with probability one. Then there are at least two non-zero probability walks $\lambda_1$ and $\lambda_2$ in $\Lambda(\mathcal{U}_t^{cut})$ such that $l(\lambda_1)$ and $l(\lambda_2)$ are in two different stages, $f(z_1)=f(z_2)$ and
\vspace{-5pt}
$$X(\mathcal{U}_t^{cut})|[Z(\mathcal{U}_t^{cut})=z_1] \not\equiv X(\mathcal{U}_t^{cut})|[Z(\mathcal{U}_t^{cut})=z_2],\vspace{-5pt}$$
where $z_1=\zeta_{Z(\mathcal{U}_t^{cut})}^{-1}(\lambda_1)$ and $z_2=\zeta_{Z(\mathcal{U}_t^{cut})}^{-1}(\lambda_2)$.
Thus, this would imply that $X(\mathcal{U}_t^{cut})$ and $Z(\mathcal{U}_t^{cut})$ are not conditionally independent given $f(Z(\mathcal{U}_t^{cut}))$, giving a contraction.

Finally, \cite[Theorem~$5$]{Collazo.Smith.2018a}~guarantees that a cut $\mathcal{U}_T^{cut}$ in an $N\text{T-DCEG}$~$\mathbb{C}$ also defines a cut at time $T$ in every CEG $\mathbb{C}_{t} \subset \mathcal{F}(\mathbb{C}), t=T, T+1,\ldots$ and by definition equations~\ref{eq:X_cut_mass_function}, \ref{eq:Q_cut_mass_function} and~\ref{eq:Z_cut_mass_function} remain valid. In this case, a cut $\mathcal{U}_T^{cut}$ does not invalidate the conditional independence properties of standard cuts in CEGs (\cite{Smith.Anderson.2008}, p. 55). The result therefore follows.

\section{Proof of Theorem \ref{the:fine_cut_independence}}
\label{app:fine_cut_independence}

\noindent
\emph{
	Take a fine cut $\mathcal{W}_t^{cut}$, $t=-1,0,1,\ldots$, in an $N$T-DCEG $\mathbb{C}$. For every ${s=0,1,\ldots}$, we have that
	\vspace{-9pt}
	\begin{equation*}
	\boldsymbol{X}(\mathcal{W}_t^{cut(s)})
	\independent
	\boldsymbol{Z}(\mathcal{W}_t^{cut(s)})
	|
	\boldsymbol{Q}(\mathcal{W}_t^{cut(s)}).
	\vspace{-9pt}
	\end{equation*}
	Additionally, if a function $f({Z}(\mathcal{W}_t^{cut(s)}))$ satisfies
	\vspace{-9pt}
	\begin{equation*}
	{X}(\mathcal{W}_t^{cut(s)})
	\independent
	{Z}(\mathcal{W}_t^{cut(s)})
	|
	f({Z}(\mathcal{W}_t^{cut(s)})),
	\vspace{-9pt}
	\end{equation*}
	then ${Q}(\mathcal{W}_t^{cut(s)})$ is a function of $f({Z}(\mathcal{W}_t^{cut(s)}))$  with probability one. These results also hold when a fine cut $\mathcal{W}_T^{cut(s)}$ is defined in a CEG $\mathbb{C}_{t+s} \in \mathcal{F}(\mathbb{C})$, ${t=T,T+1,\ldots}$.
	\vspace{3pt}}

We can assert immediately from the construction that given a value $q$ for $Q(\mathcal{W}_t^{cut(s)})$, then any random variable associated with~$\varpi_{Q(\mathcal{W}_t^{cut(s)})}(q)$ is completely defined. So none of the $w_0$-to-$\varpi_{Q(\mathcal{W}_t^{cut(s)})}(q)$ walks can bring any additional information on the realization of the random variable $X(\mathcal{W}_t^{cut(s)})$. 
Thus, $X(\mathcal{W}_t^{cut(s)}) \independent Q(\mathcal{W}_t^{cut(s)})|Z(\mathcal{W}_t^{cut(s)})$.

Now suppose that $Q(\mathcal{W}_t^{cut(s)})$ is not a function of $f(Z(\mathcal{W}_t^{cut}))$ with probability one. Then, for $s=N-1,N,\ldots$, there are at least two non-zero probability walks~$\lambda_1$ and $\lambda_2$ in $\Lambda(\mathcal{W}_t^{cut})$ such that $l(\lambda_1) \neq l(\lambda_2)$, $f(z_1)=f(z_2)$ and
\vspace{-5pt}
$$X(\mathcal{W}_t^{cut(s)})|[Z(\mathcal{W}_t^{cut(s)})=z_1] \not\equiv X(\mathcal{W}_t^{cut(s)})|[Z(\mathcal{W}_t^{cut(s)})=z_2],\vspace{-5pt}$$ 
where $z_1=\varpi_{Z(\mathcal{W}_t^{cut})}^{-1}(\lambda_1)$ and $z_2=\zeta_{Z(\mathcal{W}_t^{cut})}^{-1}(\lambda_2)$.
Thus, this would imply a contraction because $X(\mathcal{W}_t^{cut(s)})$ and $Z(\mathcal{W}_t^{cut(s)})$ were not conditionally independent given $f(Z(\mathcal{U}_t^{cut}))$. For time-horizon~$s$, $s=0,\ldots,N\!-\!2$, the proof is completely analogous to that one except that the condition  $l(\lambda_1) \neq l(\lambda_2)$ needs to be rewritten as follows:  $l(\lambda_1)$ and $l(\lambda_2)$ are in different set of the partition $\Xi_t^s$.  The result then follows.

From~\cite[Theorem~$5$]{Collazo.Smith.2018a} we can assert that in an $N\text{T-DCEG }$ $\mathbb{C}$ a fine cut $\mathcal{W}_t^{cut(s)}$  also defines a fine cut $\mathcal{W}_t^{cut(s)}$  at time $T$ in every CEG ${\mathbb{C}_{t+s} \subset \mathcal{F}(\mathbb{C})}$, ${t=T, T+1,\ldots}$, ${s=0,1,\ldots}$.  By construction, equations~\ref{eq:X_fine_cut_mass_function}, \ref{eq:Q_fine_cut_mass_function}, \ref{eq:Q_fine_cut_s_mass_function} and~\ref{eq:Z_fine_cut_mass_function} also hold. The result then follows due to the conditional independence properties of standard fine cuts in CEGs (\cite{Smith.Anderson.2008}, p. 61).

\section{Proof of Theorem \ref{pro:stochastic_local_independence}}
\label{app:local_stochast_simult_independence}

\noindent
\emph{
	Two random variables $\boldsymbol{X}$ and $\boldsymbol{Y}$ measurable with respect a DCEG are $T$-stochastically independent if and only if they are mutually $T\text{-locally}$ independent and $T$-contemporaneously independent.
	\vspace{3pt}}

Let $\boldsymbol{X}^{(t)}=(\boldsymbol{X}(0),\boldsymbol{X}(1),\ldots,\boldsymbol{X}(t))$. Assuming that the random vectors are $T$~{$\!\!\text{-}\!$}~stochastically independent, it then follows from equation~\ref{eq:stochastic_independence} that for every $t, t\geq T$,
\vspace{-5pt}
\begin{IEEEeqnarray}{rCl}
	p_{\boldsymbol{X}(t)}(\boldsymbol{x}(t)|\mathcal{E}^{(t-1)})
	&=&
	\sum_{\boldsymbol{y}{(t)}}
	p_{\boldsymbol{X}(t),\boldsymbol{Y}(t)}
	(\boldsymbol{x}(t),\boldsymbol{y}(t)|\mathcal{E}^{(t-1)})
	\nonumber \\
	&=&
	\sum_{\boldsymbol{y}{(t)}}
	p_{\boldsymbol{X}(t)}(\boldsymbol{x}(t)|\mathcal{E}^{(t-1)}_{(-\boldsymbol{Y})})	
	p_{\boldsymbol{Y}(t)}(\boldsymbol{y}(t)|\mathcal{E}^{(t-1)}_{(-\boldsymbol{X})})
	\nonumber \\
	&=&
	p_{\boldsymbol{X}(t)}(\boldsymbol{x}(t)|\mathcal{E}^{(t-1)}_{(-\boldsymbol{Y})}).
	\nonumber
\end{IEEEeqnarray}
Of course, we can obtain a completely analogous result for $\boldsymbol{Y}^{(t)}$. So these vectors are mutually $T$-locally independent. Substituting this result into equation~\ref{eq:stochastic_independence} it is straightforward to see that these vectors are also $T$-contempora-neously independent.

Conversely it is also true that
\vspace{-5pt}
\begin{IEEEeqnarray}{rCl}
	p_{\boldsymbol{X}(t),\boldsymbol{Y}(t)}
	(\boldsymbol{x}(t),\boldsymbol{y}(t)|\mathcal{E}^{(t-1)})
	&=&
	p_{\boldsymbol{X}(t)}(\boldsymbol{x}(t)|\mathcal{E}^{(t-1)})
	p_{\boldsymbol{Y}(t)}(\boldsymbol{y}(t)|\mathcal{E}^{(t-1)})	
	\nonumber \\
	&=&
	p_{\boldsymbol{X}(t)}(\boldsymbol{x}(t)|\mathcal{E}^{(t-1)}_{(-\boldsymbol{Y})})
	p_{\boldsymbol{Y}(t)}(\boldsymbol{y}(t)|\mathcal{E}^{(t-1)}_{(-\boldsymbol{X})}).
	\nonumber
\end{IEEEeqnarray}
Note that the first and second equalities follows, respectively, from the assumptions that the vectors $\boldsymbol{X}$ and $\boldsymbol{Y}$ are T-contemporaneously independent and mutually $T$-locally independent.

\section*{Acknowledgements}
\label{sec:Acknowledgement}
Rodrigo A. Collazo was supported by the Brazilian Navy and CNPq-Brazil [grant number 229058/2013-2]. Jim Q. Smith was supported by the Alan Turing Institute and funded by EPSRC [grant number EP/K039628/1].

\fontsize{11pt}{12pt}\selectfont 
\bibliographystyle{elsarticle-num} 
\bibliography{Bibliography_v1}

\end{document}